\theoremstyle{plain}
\newtheorem{theorem}{Theorem}[section]
\newcommand{\mr}{\mathbb{R}}
\newcommand{\expec}{\mathbb{E}}
\renewcommand{\vec}[1]{\mathbf{#1}}
\begin{document}

\title{Limitations of Post-Hoc Feature Alignment for Robustness}

\author{\textbf{Collin Burns}\\
UC Berkeley\\
\and
\textbf{Jacob Steinhardt}\\
UC Berkeley\\
}

\maketitle

\begin{abstract}
    Feature alignment is an approach to improving robustness to distribution shift that matches the distribution of feature activations between the training distribution and test distribution.
    A particularly simple but effective approach to feature alignment involves aligning the batch normalization statistics between the two distributions in a trained neural network.
    This technique has received renewed interest lately because of its impressive performance on robustness benchmarks.
    However, when and why this method works is not well understood. 
    We investigate the approach in more detail and identify several limitations.
    We show that it only significantly helps with a narrow set of distribution shifts and we identify several settings in which it even degrades performance.
    We also explain why these limitations arise by pinpointing why this approach can be so effective in the first place.
    Our findings call into question the utility of this approach and Unsupervised Domain Adaptation more broadly for improving robustness in practice.
\end{abstract}

\section{Introduction}\label{sec:into}

A foundational assumption made in most of machine learning is that the training distribution is identical to the test distribution. However, this assumption is commonly violated in practice, which can substantially decrease the performance of models \citep{hendrycks2019benchmarking,recht2019imagenet}.
This can be especially problematic in high-stakes applications such as autonomous vehicles.
One way of improving robustness is to exploit unlabeled test data to adapt the model to the new distribution. This 
process is called Unsupervised Domain Adaptation (UDA) \citep{deep_da_survey}.

A common approach in UDA, known as feature alignment or domain alignment, is to align the feature activations between the source and target distributions 
\citep{coral, uda_backprop, domain_adv_training, deep_da_survey, autodial, deep_activation_matching, shu2018dirt, conditional_adversarial, coreg_alignment_uda, feature_whitening_uda, deng2019cluster}. 
Feature alignment has also been applied beyond UDA in domains such as causal inference \citep{johansson2016learning,shalit2017estimating}. 
Simple forms of feature alignment normalize the features of a trained model so that the training set and test set have the same first and second order statistics in some feature space \citep{coral,adabn}, while other approaches match distributions in more complicated ways, such as by being indistinguishable to an adversarial discriminator \citep{conditional_adversarial, domain_adv_training}. 

We focus on one simple feature alignment method: Adaptive Batch Normalization (AdaBN) \citep{adabn}. Like many other popular and effective feature alignment methods (e.g. \citet{coral,deep_coral,autodial,feature_whitening_uda,wang2019transferable}), AdaBN is normalization-based, meaning it matches first and second order statistics between the two feature distributions. It is also a post-hoc method, meaning it aligns features for a model that has already been trained, making it particularly simple and applicable even for unforseen distribution shifts. 
Given a neural network trained on source data with Batch Normalization (BN) \citep{ioffe2015batch}, AdaBN re-estimates the BN statistics of that model using the target data. 
In other words, AdaBN aligns the mean and variance of each channel in the network across the two distributions. 

Despite its simplicity, in recent work \citet{Schneider2020ImprovingRA, Nado2020EvaluatingPB} showed that aligning batch norm statistics between the train and test distributions can be used to achieve state-of-the-art accuracy on the robustness benchmark ImageNet-C \citep{hendrycks2019benchmarking}.
\citet{Schneider2020ImprovingRA} argues that we should therefore start using normalization-based feature alignment methods whenever we evaluate robustness. 
\citet{Nado2020EvaluatingPB} additionally finds that aligning BN statistics does not help as much for some other types of distribution shift.
However, neither paper describes why this method works well on ImageNet-C or why it does not help as much with other types of distribution shift.

We build on this work by investigating when and why methods like AdaBN help. Our findings include:
\begin{itemize}
    \item Showing that aligning BN statistics can actually \emph{degrade} accuracy on several types of distribution shift, both conceptually and in practice.
    \item Identifying implicit symmetry assumptions made by these methods and showing how violations of these assumptions can cause performance degradation.
    \item Demonstrating and explaining how aligning BN statistics primarily helps with distribution shifts that involve changes in local image statistics.
\end{itemize}

Our findings have several implications. 
While aligning BN statistics is an effective method for improving robustness in some settings, it only significantly helps on a narrow set of distribution shifts and can even degrade performance. 
These limitations may prevent it from being useful in practical applications. 
Furthermore, we find that existing justifications of feature alignment are inadequate for explaining when and why these methods work. Future work on UDA should explicitly identify the properties of data distributions and neural networks that these methods rely on in practice.
Finally, some of our findings apply to UDA more broadly, calling into question whether UDA is a strong approach to improving the robustness of machine learning systems in the first place. 
More work is therefore needed to make UDA practical for improving robustness.

\begin{figure*}[t]
    \vspace{-10pt}
    \centering
    \begin{subfigure}[t]{0.33\linewidth}
        \centering
        \includegraphics[width=60mm]{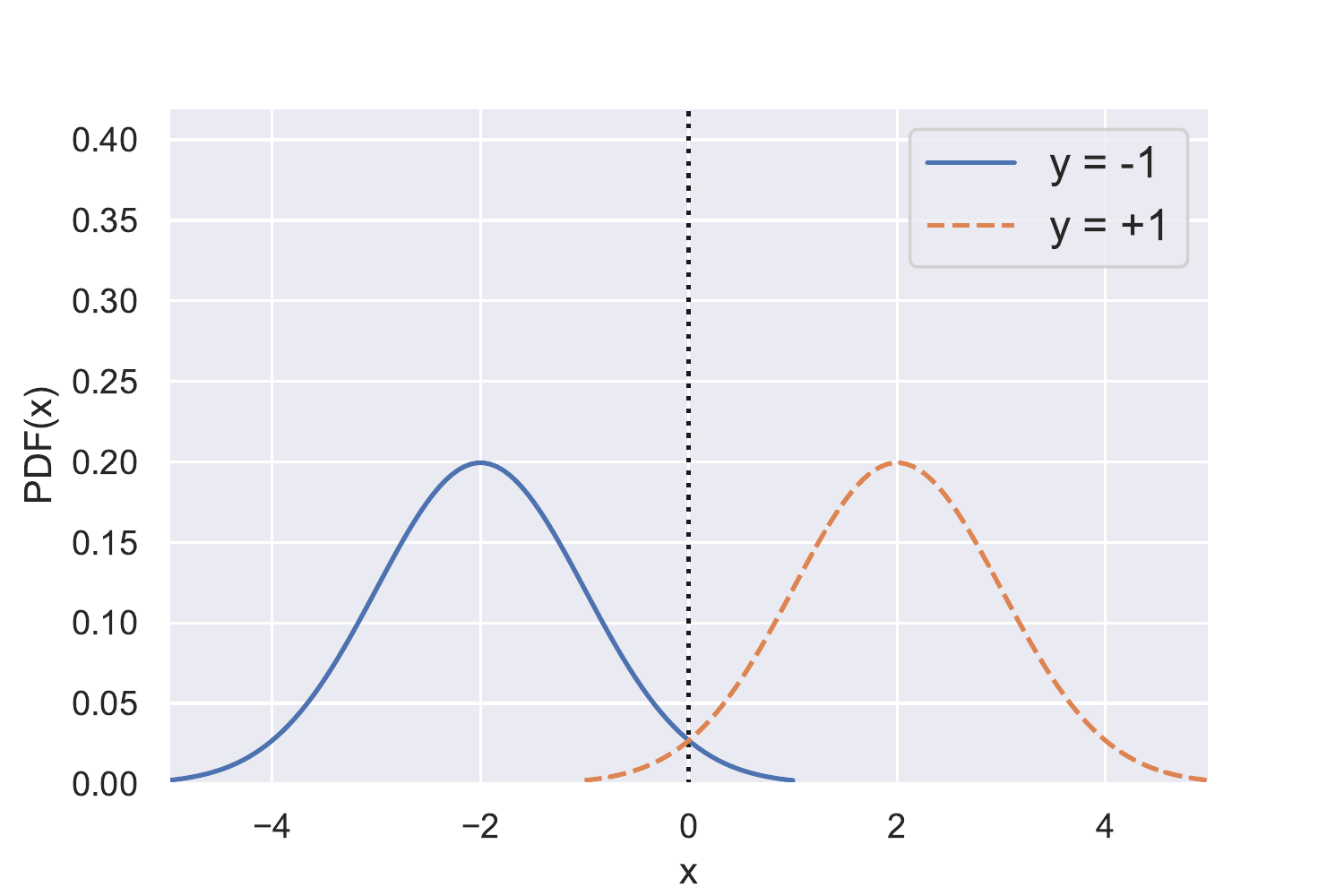}
        \caption{Source distribution.}
        \label{subfig:label_shift_source}
    \end{subfigure}
    \begin{subfigure}[t]{0.33\linewidth}
        \centering
        \includegraphics[width=60mm]{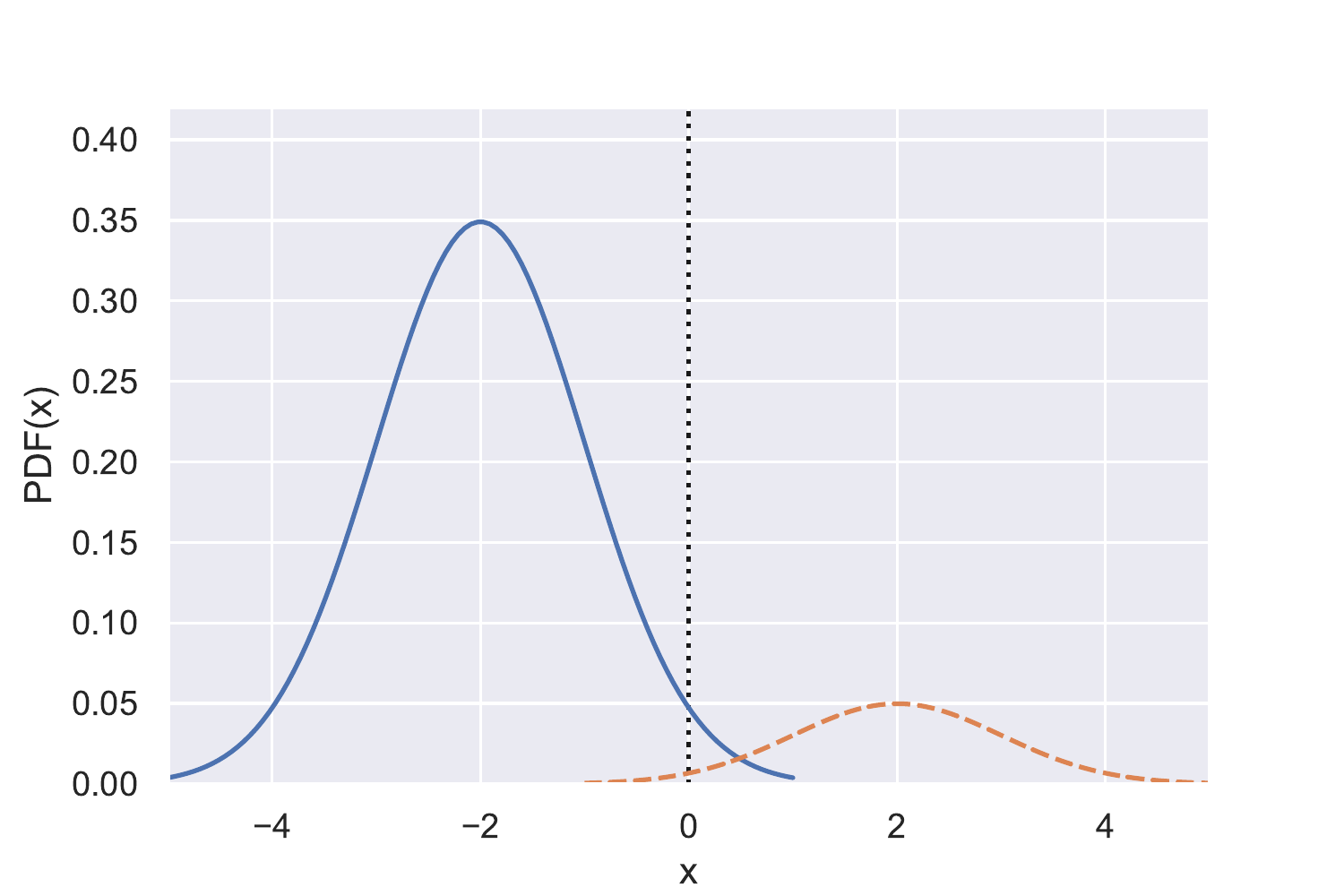}
        \caption{Target distribution.}
        \label{subfig:label_shift_target}
    \end{subfigure}
    \begin{subfigure}[t]{0.33\linewidth}
        \centering
        \includegraphics[width=60mm]{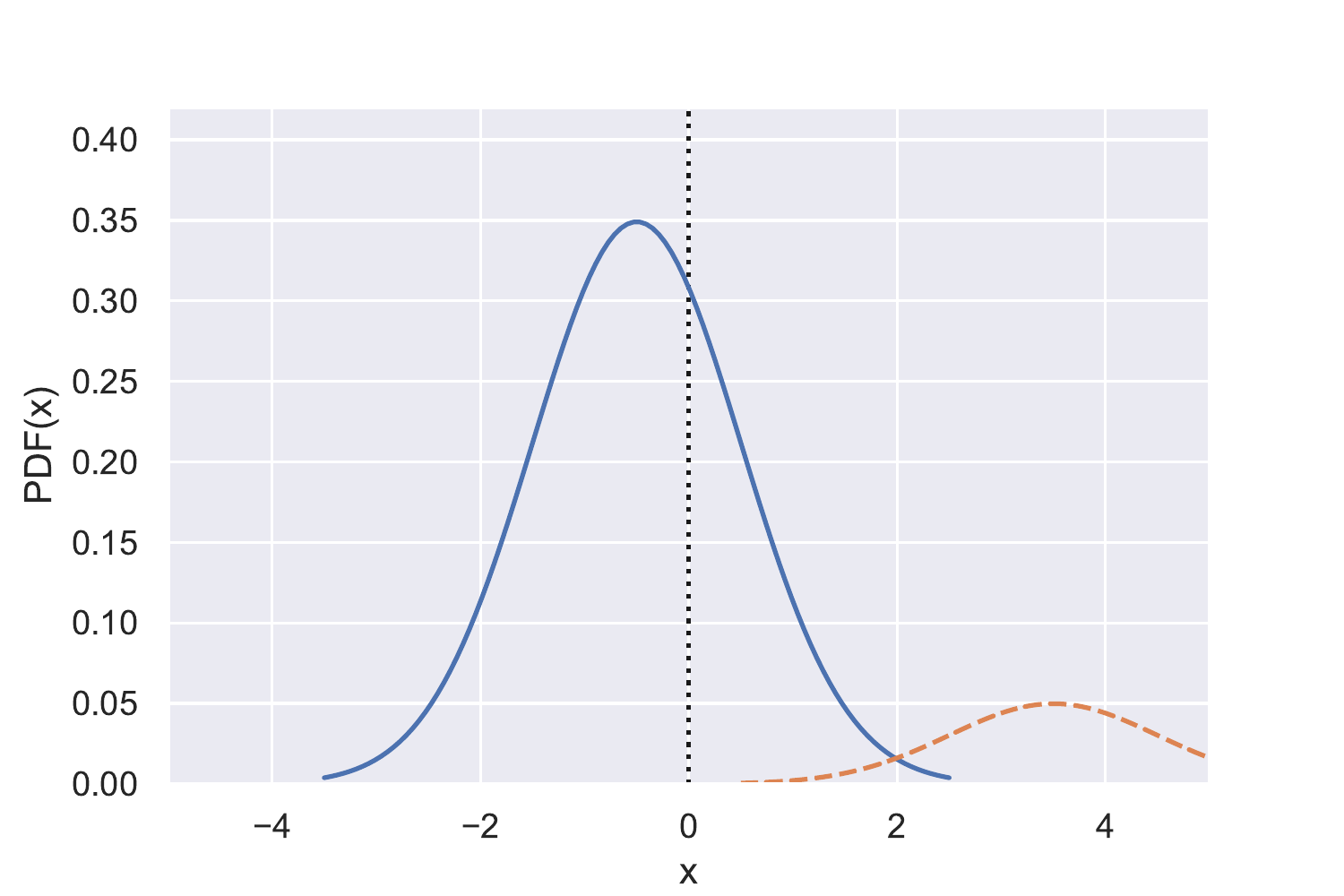}
        \caption{Mean-normalized target distribution.}
        \label{subfig:label_shift_normalized_target}
    \end{subfigure}
    \caption{
    An illustration of how aligning the means between the original and shifted distributions can hurt accuracy when there is label shift. Aligning the variances does not change the accuracy in this case. The blue curve is the PDF for $y=-1$ and the orange (dashed) curve is the PDF for $y=+1$. The dashed line indicates the decision boundary of the classifier. Both classes are initially equally likely (\Cref{subfig:label_shift_source}). After shifting the label distribution, the accuracy of the original classifier remains high (\Cref{subfig:label_shift_target}), but decreases after normalizing the mean (\Cref{subfig:label_shift_normalized_target}).}
    \label{fig:label_shift_failure}
\end{figure*}

\begin{table*}[t]
\caption{Accuracy of each method on CIFAR-10 (C-10), TinyImageNet (TIN), ImageNet (IN), CIFAR-10-C (C-10-C), TinyImageNet-C (TIN-C), ImageNet-C (IN-C), ImageNetV2 (INV2), and Stylized ImageNet (SIN).} 
\label{tbl:adabn_accuracies}
\begin{center}
\begin{small}
\begin{sc}
\begin{tabular}{lccc|ccc|cc}
\toprule
Method & C-10 & TIN & IN & C-10-C & TIN-C & IN-C & INV2 & SIN\\
\midrule
Original model        & 94.8 & 63.8 & 76.1 & 72.3 & 24.7 & 38.1 & 63.2 & 7.1\\
AdaBN                 & 92.8 & 60.3 & 75.6 & 83.6 & 40.1 & 46.9 & 60.9 & 10.2\\
\bottomrule
\end{tabular}
\end{sc}
\end{small}
\end{center}
\vskip -0.1in
\end{table*}

\section{Related Work}\label{sec:related_work}
We focus on feature alignment methods that work by aligning the Batch Normalization statistics between the source and target distributions for a trained neural network. In this section, we describe how this relates to other feature alignment methods, and we describe why existing justifications for feature alignment do not adequately explain their practical success.

\paragraph{Feature Alignment Methods.}
Several UDA methods closely resemble AdaBN by similarly aligning normalization statistics of trained models. \citet{coral} whiten and re-color the target distribution to match the mean and covariance of the source distribution in the input. \citet{deep_coral} extend this by matching the mean and covariance in a neural network layer, rather than in the input. 
Because these are post-hoc methods based on normalization like AdaBN, our findings directly apply to them as well.

Some UDA methods are normalization-based but require modifying the \textit{training} of neural networks as well.
\citet{autodial} modify AdaBN by learning a linear combination of source and target Batch Normalization statistics. 
\citet{wang2019transferable} introduce a new layer for UDA that uses domain-specific Batch Normalization statistics and that automatically adapts to the transferability of different channels.
Some, but not all, of our findings apply to these methods as well.

In a related vein, adversarial alignment methods such as \citet{domain_adv_training,conditional_adversarial} learn feature representations for which a discriminator cannot distinguish source and target data. 
Unlike the normalization-based approaches that we focus on in this work, adversarial methods aim to learn feature representations that are completely indistinguishable instead of only matching first and second order statistics, and again modify the training of networks, which can be expensive.
These methods can improve performance, but they are also much less efficient than post-hoc feature alignment methods.

\paragraph{Justifications of Feature Alignment are Inadequate.}
Many papers that introduce feature alignment methods intuitively suggest that matching feature distributions makes the features more domain-invariant and consequently mitigates the effects of distribution shift \citep{coral,deep_coral,uda_backprop,autodial}. 
However, aligning the features between two distributions is not sufficient for good test performance in general because aligning the marginal distributions $p_S(x)$ and $p_T(x)$ in some feature space may not align the \emph{class-conditional} distributions $p_S(x | y)$ and $p_T(x | y)$ \citep{on_invariant_reps, johansson2019support}. 

Some papers (e.g. \citet{conditional_adversarial,domain_adv_training}) motivate aligning feature distributions by referring to \citet{ben2010theory}, which introduces generalization bounds for UDA. For a given hypothesis class $\mathcal{H}$ and feature space, these bounds guarantee good test performance as long as (i) the two distributions are ``indistinguishable'' with respect to $\mathcal{H}$, and (ii) there is a hypothesis $h \in \mathcal{H}$ that simultaneously does well on both distributions. However, \citet{on_invariant_reps,johansson2019support} recently described problems with this theory, and in \Cref{appx:subsec:uninformative} we argue that these generalization bounds are probably vacuous in practice. 
In contrast to this work, we focus on empirically understanding when and why aligning BN statistics works \emph{in practice}. 

Several impossibility theorems show that successful UDA requires strong assumptions on the source and target distributions \citep{ben2010impossibility,ben2012hardness}. 
Nevertheless, many feature alignment methods are effective in practice.
This raises the question: \emph{What properties of distribution shifts and neural networks does feature alignment exploit to improve robustness?} 
We answer this question for AdaBN in the process of investigating its limitations.

\section{Failure Modes of AdaBN}\label{sec:failure}
In this section, we characterize when normalization-based methods \emph{hurt} accuracy. Prior work showed that feature alignment can degrade performance under label shift, i.e. $p_S(y) \neq p_T(y)$ \citep{on_invariant_reps, johansson2019support,redko2019}. We extend these earlier observations by showing that label shift also has a more severe impact on \textit{deep} layers than on \textit{shallow} layers. 

We then construct two additional failure modes that can occur even when the label distribution doesn't change, i.e. $p_S(y) = p_T(y)$, and even under the covariate shift assumption, i.e. when $p_S(y | x) = p_T(y | x)$. In particular, we show that normalization-based alignment methods can fail when either different \emph{examples} or \emph{spatial locations} are shifted in qualitatively different ways, and we show that both types of shift can arise in practice.
This suggests that these methods would be unreliable in safety-critical applications involving unforeseen distribution shifts.

For each of the three failure modes we exhibit, we first provide a simple conceptual example of why the failure mode is possible, then demonstrate the failure on real data.

\subsection{Experimental Setup}\label{subsec:setup}
We begin by describing the experimental setup that we use for the remainder of the paper.
Code for the experiments is available at \url{https://github.com/collin-burns/feature-alignment}.

\begin{figure*}[t]
    \vspace{-10pt}
    \centering
    \begin{subfigure}[t]{0.49\linewidth}
        \centering
        \includegraphics[scale=0.40]{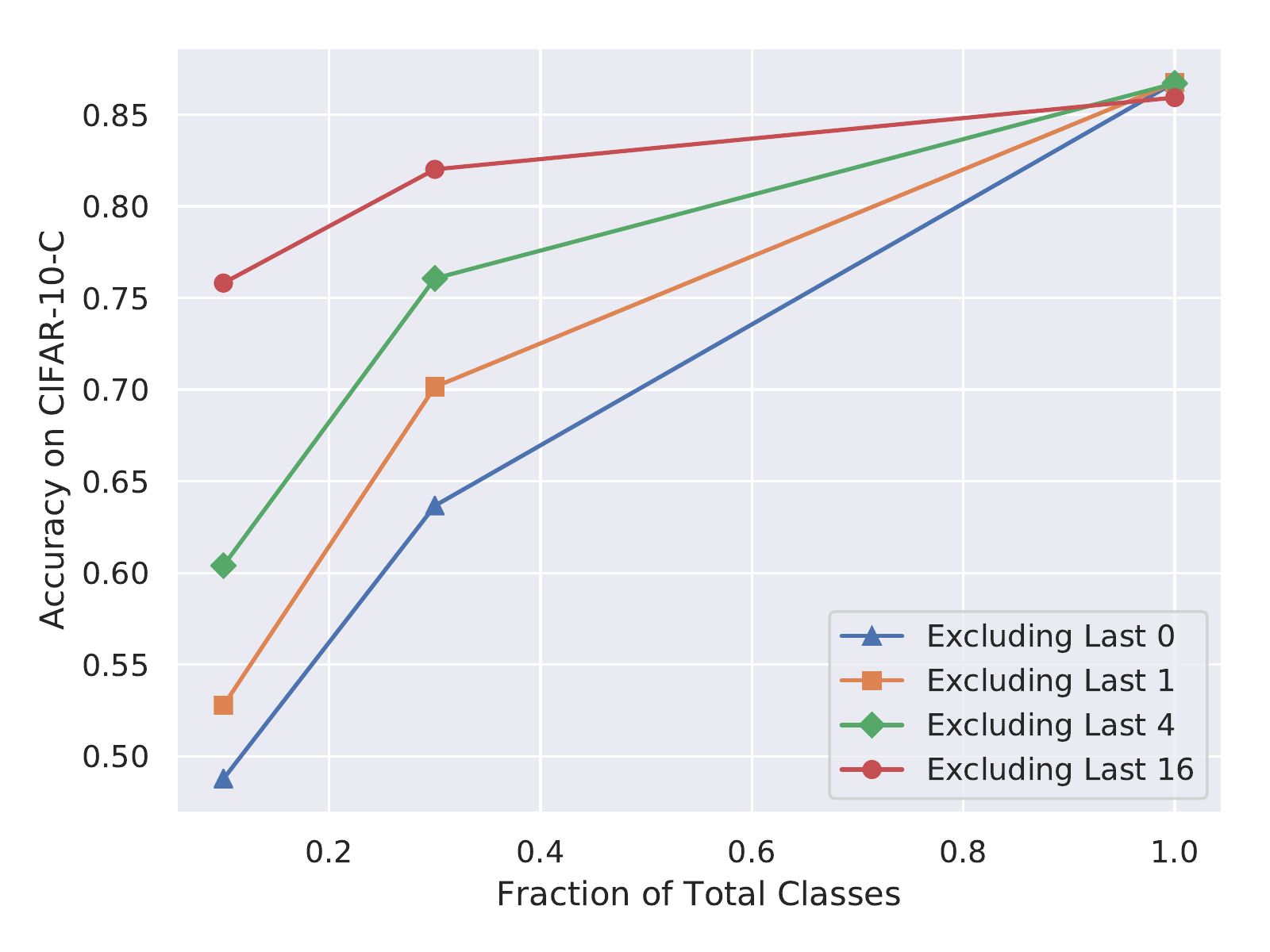}
        \caption{CIFAR-10-C Excluding Last Layers}
        \label{subfig:excl_last}
    \end{subfigure}
    \begin{subfigure}[t]{0.49\linewidth}
        \centering
        \includegraphics[scale=0.40]{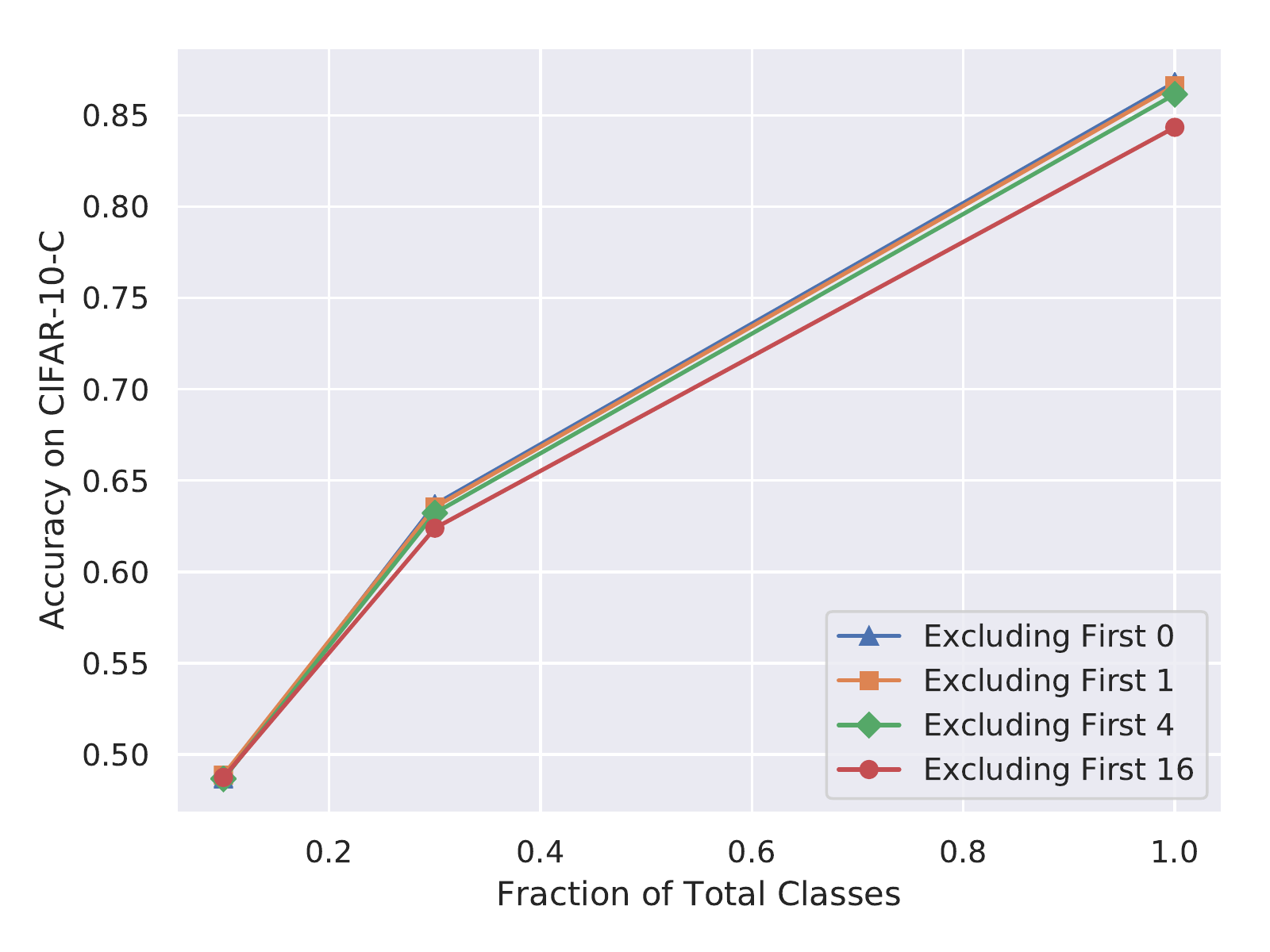}
        \caption{CIFAR-10-C Excluding First Layers}
        \label{subfig:excl_first}
    \end{subfigure}
    \caption{The effect of updating the Batch Normalization statistics (AdaBN) on CIFAR-10-C in all but the last $k$ BN layers (left) and all but the first $k$ BN layers (right) as a function of the number of classes kept in the target shift, for $k \in \{0, 1, 4, 16\}$. There are $37$ BN layers in total. AdaBN does worse as the number of classes decreases. We can mitigate this decrease by excluding some of the final Batch Normalization layers from feature alignment, but not by excluding some of the first Batch Normalization layers. This indicates that deep layers are more sensitive to label shift than shallow layers.}
    \label{fig:label_shift}
\end{figure*}

\paragraph{Datasets.}
We evaluate models on a diverse set of distribution shift datasets. In several experiments we use the robustness benchmarks CIFAR-10-C, TinyImageNet-C, and ImageNet-C \citep{hendrycks2019benchmarking}. These datasets include $15$ noise, blur, weather, and digital corruptions with $5$ severities for each; we apply AdaBN on each corruption and severity independently then average the resulting accuracies. We also run experiments on ImageNetV2 \citep{recht2019imagenet} and Stylized ImageNet \citep{stylized_imagenet}. ImageNetV2 was constructed by trying to reproduce how the ImageNet dataset was collected. This distribution shift reduces the accuracy of our ImageNet-trained model from $76\%$ to $63\%$. Stylized ImageNet \citep{stylized_imagenet} changes the texture and style of ImageNet images in a variety of ways. This more severe shift reduces the accuracy all the way down to $10\%$. As we are working in the context of domain adaptation, for each dataset we assume we have access to all of the unlabeled target data. 

Using these datasets has two main advantages. First, ImageNet-C, ImageNetV2, and Stylized ImageNet are qualitatively distinct shifts, but they are readily comparable because they are all based on ImageNet. Second, the corruption datasets make it possible to compare different severities of shift while controlling for other factors, which we make use of in some experiments. 

\paragraph{Models.}
We use the pre-trained ResNet-$50$ model included in the \texttt{torchvision} package \citep{pytorch} for all ImageNet experiments. For all CIFAR-10 and TinyImageNet experiments, we use a $40$-$2$ WideResNet \citep{wrn} trained for $100$ epochs using SGD with momentum $0.9$, initial learning rate $0.1$, weight decay $0.0005$, dropout rate $0.3$, and batch size $128$. For data augmentation, we use random cropping with zero-padding $4$ and random horizontal flips. We show the performance of these models on each dataset in \Cref{tbl:adabn_accuracies}.

\subsection{Shifted Label Distribution}\label{subsec:shifted_label_dist}
We now show how it can hurt to normalize the feature distributions when $p_S(y) \neq p_T(y)$.
Similar observations were made in prior work \citep{on_invariant_reps, johansson2019support,redko2019}, but we extend this by investigating how it occurs in more detail and by also showing that deeper layers are more sensitive to label shift than shallow layers.

\paragraph{Conceptual Example.}
Consider binary classification when $y$ is sampled uniformly from $\{-1, 1\}$, and $x~\sim~\mathcal{N}(2y, 1)$. The Bayes classifier is $f(x) = \text{sign}(x)$. Suppose we shift the label distribution so that $p(y = -1) = \frac{7}{8}$ for the target distribution. The new mean is then $-\frac{3}{2}$. If we normalize the mean to match the original mean of zero, this pushes the $y = -1$ mode from $\mathcal{N}(-2, 1)$ to $\mathcal{N}(-\frac{1}{2}, 1)$ and increases the classification error substantially, as illustrated in \Cref{fig:label_shift_failure}. This is despite the fact that the classifier would have had high accuracy without any normalization. 

\paragraph{In Practice.} We now exhibit this issue on CIFAR-10. From the discussion above, we should expect the accuracy of AdaBN to degrade as it is applied to a smaller fraction of classes. We confirm this and show the results in \Cref{fig:label_shift} (blue curve). Specifically, we evaluate the accuracy of AdaBN applied to subsets of CIFAR-10-C classes, while still allowing the classifier to output any of the $10$ classes. Making some classes occur with probability zero is an extreme form of class reweighting, but one that could still arise in practice.
For simplicity, we use the first $k$ classes for different values of $k$. 
In the worst case of a single class, the accuracy falls below $50\%$. 

\paragraph{Shallow vs Deep Layers.} Intuitively, shallow layers capture low-level information like edges and colors, which should be mostly class-agnostic, while deeper layers capture more abstract, class-specific representations. This suggests that only updating the Batch Normalization statistics in the earlier layers may mitigate the drop in accuracy caused by applying AdaBN under label shift. 

We confirm this and show the results in \Cref{subfig:excl_last}. When one doesn't update the last $16$ (out of $37$) Batch Normalization layers, accuracy remains high even when AdaBN is applied to a single class. To check that this is due to excluding the \emph{final} layers, we also test not updating the {first} $k$ Batch Normalization layers and confirm that it does not improve performance (\Cref{subfig:excl_first}). We find similar results on other datasets; see \Cref{appx:subsec:subsets} for details.

\subsection{Shifted Spatial Locations}\label{subsec:diff_feature_shift}

A second type of failure mode occurs when there are different shifts for different spatial locations. This can occur if, for example, a border is added to every image, as this results in the distribution of boundary pixels changing dramatically without the distribution of interior pixels changing at all. We illustrate this in \Cref{fig:black_border_visualization} (top row). Unlike the previous failure mode, this can arise even under the covariate shift assumption and when the class distribution is fixed. 

\paragraph{Conceptual Example.} Again consider binary classification when $y$ is sampled uniformly from $\{-1, +1\}$. Let $\vec{x} = (x_1, x_2)$, where $x_1 \sim \mathcal{N}(4+2y, 1)$ and $x_2 \sim \mathcal{N}(4, 1)$. The classifier $f(\vec{x}) = \text{sign}(x_1-4)$ has high accuracy.

We use the features $x_1$ and $x_2$ to model different spatial dimensions in a convolutional channel. Since Batch Normalization computes the mean and variance over both a batch of examples and all spatial locations within a channel, we simultaneously normalize $\vec{x}$ over both samples and coordinates. Specifically, AdaBN matches the mean and variance of a shifted input $\vec{x}$ by transforming the input to be
\begin{equation}\label{eqn:adabn}
    \tilde{\vec{x}} = \frac{\sigma_s}{\sigma_t}(\vec{x} - \mu_t\cdot \vec{1}) + \mu_s\cdot \vec{1} \,,
\end{equation}
where $\vec{1} := (1, 1) \in \mr^2$. Imagine we shift the distribution by making $x_2 = 0$ for each example. This doesn't change the accuracy of $f$, but it decreases the mean and variance of $\vec{x}$. The original mean and variance were the average mean and variance over each dimension: $\mu_s = \frac{1}{2}(\expec[x_1]+\expec[x_2]) = 4$ and $\sigma^2_s = \frac{1}{2}(\text{var}(x_1)+\text{var}(x_2)) = 1$. Under the shift, $\expec[x_2] = \text{var}(x_2) = 0$, so the mean and variance become $\mu_t = \frac{1}{2}(\expec[x_1]+0) = 2$ and $\sigma^2_t = \frac{1}{2}(\text{var}(x_1)+0) = \frac{1}{2}$.

For the values given above, we have $\tilde{x}_1 = \sqrt{2}(x_1 - 2) + 4$. The mode corresponding to $y=-1$ is initially centered at $x_1 = 2$, so after normalizing it shifts to $\tilde{x}_1 = 4$. Since the decision boundary of $f$ passes through $x_1 = 4$, the new error of $f$ conditioned on $y=-1$ is $\frac{1}{2}$, which is higher than it was before applying AdaBN. 

\begin{table}[h]
\caption{Accuracy on the Black Border distribution shift.}
\label{tbl:black_border}
\vspace{-5pt}
\begin{center}
\begin{small}
\begin{sc}
\begin{tabular}{lcc}
\toprule
Method & C-10 & TIN \\
\midrule
Original model           & 65.0 & 22.6\\
AdaBN                    & 52.5 & 11.8 \\
\bottomrule
\end{tabular}
\end{sc}
\end{small}
\end{center}
\vspace{-10pt}
\end{table}

\paragraph{In Practice.} We now exhibit an analogous failure mode on real data. Our example uses the ``black border'' transformation, where we remove all boundary pixels by replacing them with zero. Similar to the conceptual example, this shifts the distribution of some spatial locations but not others. We evaluate the robustness of models to this transformation on CIFAR-10 and TinyImageNet, where we chose the width of the border to be $1/4$ the length of the image, so that $25\%$ of the area of the image remains. 

The results are given in \Cref{tbl:black_border}. Applying AdaBN to this transformation hurts accuracy relative to the original model, almost cutting it in half for TinyImageNet.

To verify that the drop in performance comes from shifted spatial locations, we visualize the effect of Batch Normalization on the activations of the model. 
We show representative channel activations after the first and twenty-first Batch Normalization layers in \Cref{fig:black_border_visualization}. 
For both layers, AdaBN changes the scale of the activations so that the mean is closer to that of a typical in-distribution activation. Since the border pixels are either darker than a normal input (top) or brighter than a normal input (bottom), matching the mean throws off the scale of the center of the image, which contains the actual content.  

\begin{figure}[t]
    \centering
    \begin{subfigure}[t]{0.15\textwidth}
        \centering
        \includegraphics[width=20mm]{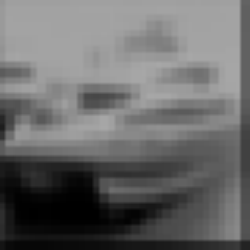}
         \captionsetup{justification=centering}
        \caption{Default model, \\original, L$1$.}
        \label{subfig:def_orig_1}
    \end{subfigure}
    \begin{subfigure}[t]{0.15\textwidth}
        \centering
        \includegraphics[width=20mm]{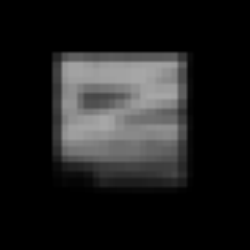}
         \captionsetup{justification=centering}
        \caption{Default model, \\transformed, L$1$.}
        \label{subfig:def_transf_1}
    \end{subfigure}
    \begin{subfigure}[t]{0.15\textwidth}
        \centering
        \includegraphics[width=20mm]{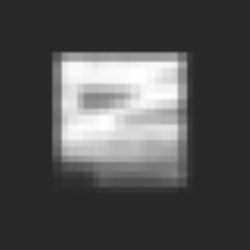}
         \captionsetup{justification=centering}
        \caption{AdaBN, \\transformed, L$1$.}
        \label{subfig:updated_transf_1}
    \end{subfigure}
    
    \begin{subfigure}[t]{0.15\textwidth}
        \centering
        \includegraphics[width=20mm]{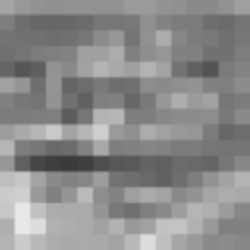}
         \captionsetup{justification=centering}
        \caption{Default model, \\original, L$21$.}
        \label{subfig:def_orig_21}
    \end{subfigure}
    \begin{subfigure}[t]{0.15\textwidth}
        \centering
        \includegraphics[width=20mm]{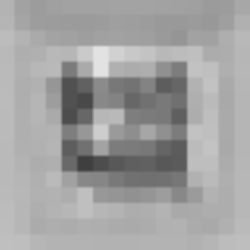}
         \captionsetup{justification=centering}
        \caption{Default model, \\transformed, L$21$.}
        \label{subfig:def_transf_21}
    \end{subfigure}
    \begin{subfigure}[t]{0.15\textwidth}
        \centering
        \includegraphics[width=20mm]{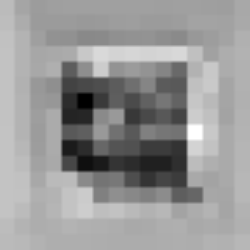}
         \captionsetup{justification=centering}
        \caption{AdaBN, \\transformed, L$21$.}
        \label{subfig:updated_transf_21}
    \end{subfigure}
    \caption{A representative channel in the $1$st Batch Normalization layer (top row, L1) and $21$st Batch Normalization layer (bottom row, L21). The scales of the images (the minimum and maximum values, corresponding to black and white) are the same. Updating the Batch Normalization statistics (\Cref{subfig:updated_transf_1,subfig:updated_transf_21}) changes the magnitude of the activations even though the original activations are more appropriate (\Cref{subfig:def_orig_1,subfig:def_orig_21}).}
    \label{fig:black_border_visualization}
\end{figure}

\subsection{Shifted Examples}\label{subsec:multimodal}
Finally, we show that normalization-based feature alignment can fail if different examples are subject to different shifts. 
One example of this is with label shift, but we show that it is a more general phenomenon that can occur even when $p_S(y) = p_T(y)$. Specifically, it can naturally occur when the distribution of a single spatial location is multimodal, such as for a mixture distribution. 

\begin{figure*}[t]
    \centering
    \begin{subfigure}[t]{0.33\textwidth}
        \centering
        \includegraphics[width=60mm]{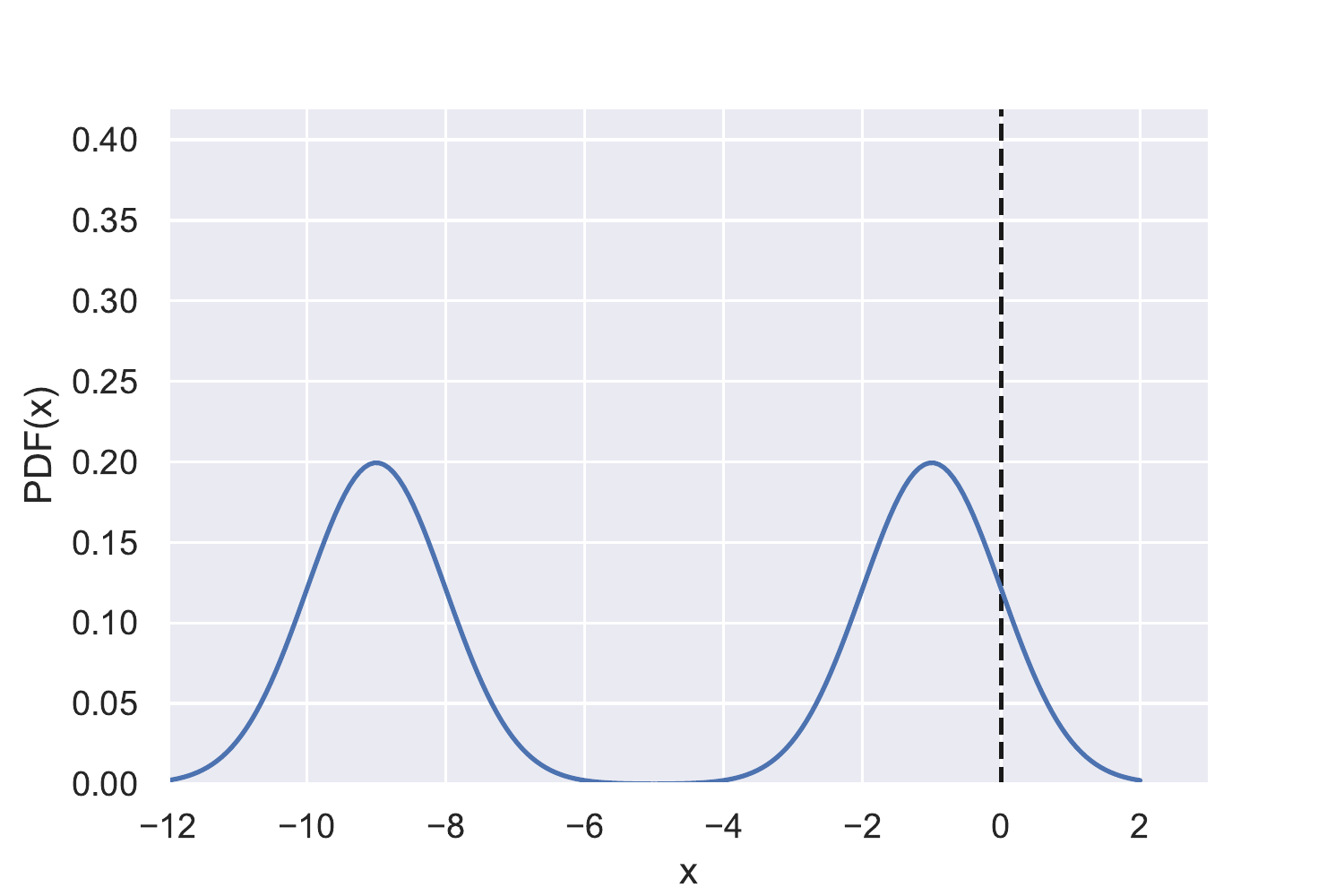}
         \captionsetup{justification=centering}
        \caption{Source distribution.}
    \end{subfigure}
    \begin{subfigure}[t]{0.33\textwidth}
        \centering
        \includegraphics[width=60mm]{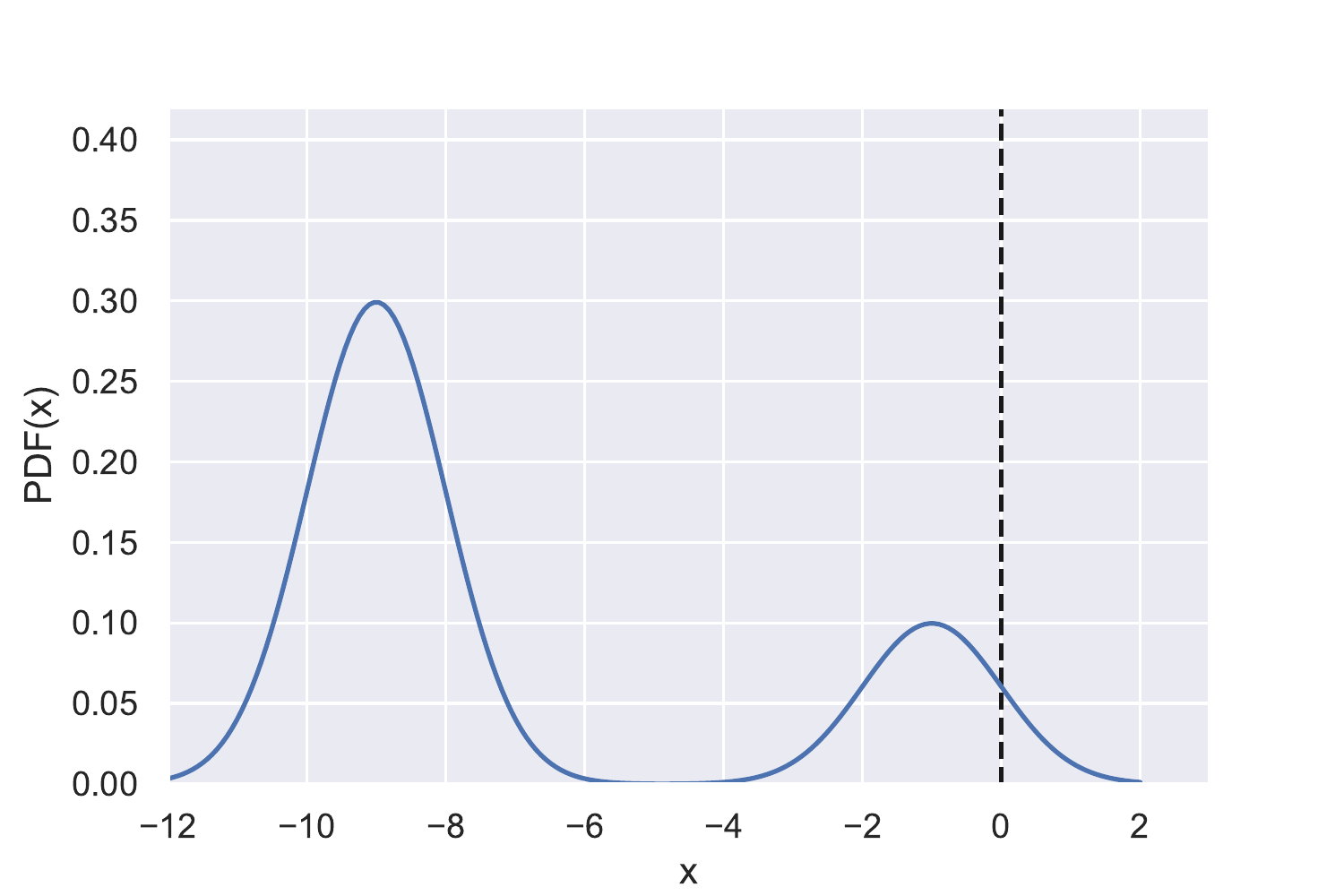}
         \captionsetup{justification=centering}
        \caption{Target distribution.}
    \end{subfigure}
    \begin{subfigure}[t]{0.33\textwidth}
        \centering
        \includegraphics[width=60mm]{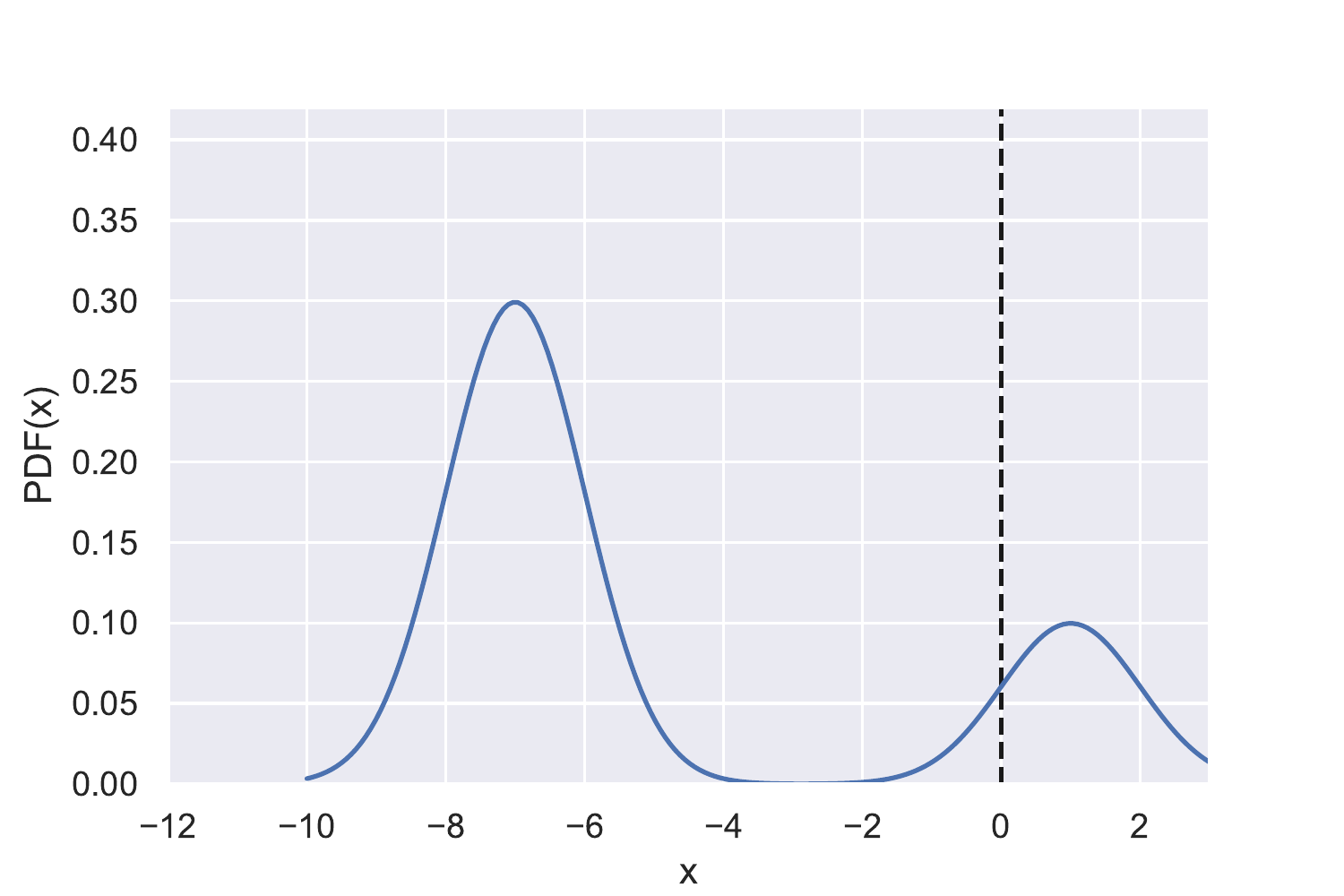}
         \captionsetup{justification=centering}
        \caption{Mean-aligned target distribution.}
    \end{subfigure}
    \caption{An illustration of how aligning the class-specific means can hurt accuracy when the class feature distribution is multi-modal. The blue curve is the PDF of $x | y=-1$ and has two modes. For clarity, we do not show the PDF of $x | y=+1$. The dashed line indicates the decision boundary of the classifier. After shifting the feature distribution so that the mode centered at $-9$ becomes more likely, the accuracy remains high. However, aligning the source and target means pushes the mode centered at $-1$ to be greater than $0$, causing a drop in accuracy.}
    \label{fig:multimodal_failure}
\end{figure*}

\begin{figure}[hb]
    \centering
    \begin{subfigure}[t]{0.49\linewidth}
        \centering
        \includegraphics[width=22mm]{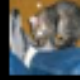}
        \subcaption{Augmented (source)}
        \label{subfig:augmented_source}
    \end{subfigure}
    \begin{subfigure}[t]{0.49\linewidth}
        \centering
        \includegraphics[width=22mm]{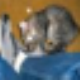}
        \subcaption{Original (target)}
        \label{subfig:original_target}
    \end{subfigure}
    \caption{An example of the augmentation used to train our CIFAR-10 and TinyImageNet models. The source (training) distribution includes images like \Cref{subfig:augmented_source} while the target (test) distribution only includes unaugmented examples like \Cref{subfig:original_target}, causing a harmless distribution shift.} 
    \label{fig:aug_example}
\end{figure}

\paragraph{Conceptual Example.} Consider binary classification again, but this time suppose $x | y$ is a mixture of Gaussians.
For simplicity, we focus on $y=-1$ and assume that we normalize separately for each class. Define the source distribution by drawing $x | (y = -1)$ from the mixture distribution $\frac{1}{2} N(-9, 1) + \frac{1}{2} N(-1, 1)$. 
The classifier $f(x) = \text{sign}(x)$ initially has low error conditioned on $y=-1$. Suppose we now reweight the modes of $x | (y = -1)$ so that we instead sample from $\frac{3}{4}N(-9, 1) + \frac{1}{4}N(-1, 1)$. This decreases both the variance and the mean. Normalizing to have the original mean and variance then pushes the $N(-1, 1)$ mode to be greater than $0$, resulting in a larger classification error. We illustrate this in \Cref{fig:multimodal_failure}.

\begin{table*}[h]
\caption{Accuracy (in percent) of AdaBN with training augmentations (+ Aug) on each dataset. The change in accuracy from AdaBN is given in parentheses. AdaBN + Aug does better in most cases, especially on CIFAR-10 and TinyImageNet.}
\label{tbl:adabn_aug_accuracies}
\begin{center}
\begin{footnotesize}
\begin{sc}
\begin{tabular}{lccc|ccc|cc}
\toprule
Method & C-10 & TIN & IN & C-10-C & TIN-C & IN-C & INV2 & SIN\\
\midrule
AdaBN + Aug         & 94.8 (+2.0) & 64.0 (+3.7) & 76.0 (+0.4) & 86.7 (+3.1) & 41.8 (+1.7) & 43.3 (-3.6) & 63.8 (+2.9) & 8.4 (-1.8)\\
\bottomrule
\end{tabular}
\end{sc}
\end{footnotesize}
\end{center}
\end{table*}

\begin{figure*}[t]
    \centering
    \begin{subfigure}[t]{0.33\textwidth}
        \centering
        \includegraphics[width=58mm]{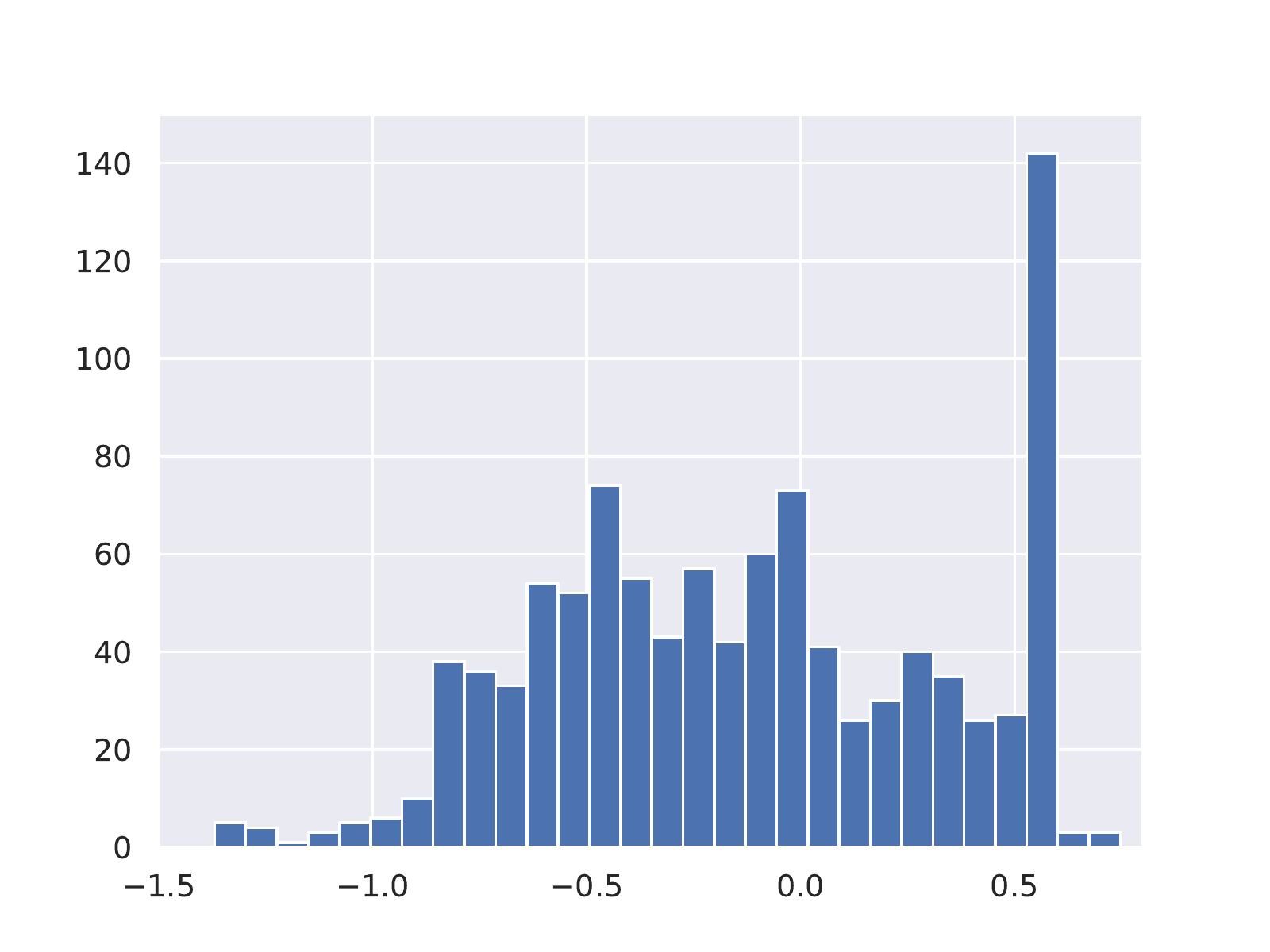}
         \captionsetup{justification=centering}
        \caption{Original Model, \\Train Aug (Source)}
        \label{subfig:orig_train}
    \end{subfigure}
    \begin{subfigure}[t]{0.33\textwidth}
        \centering
        \includegraphics[width=58mm]{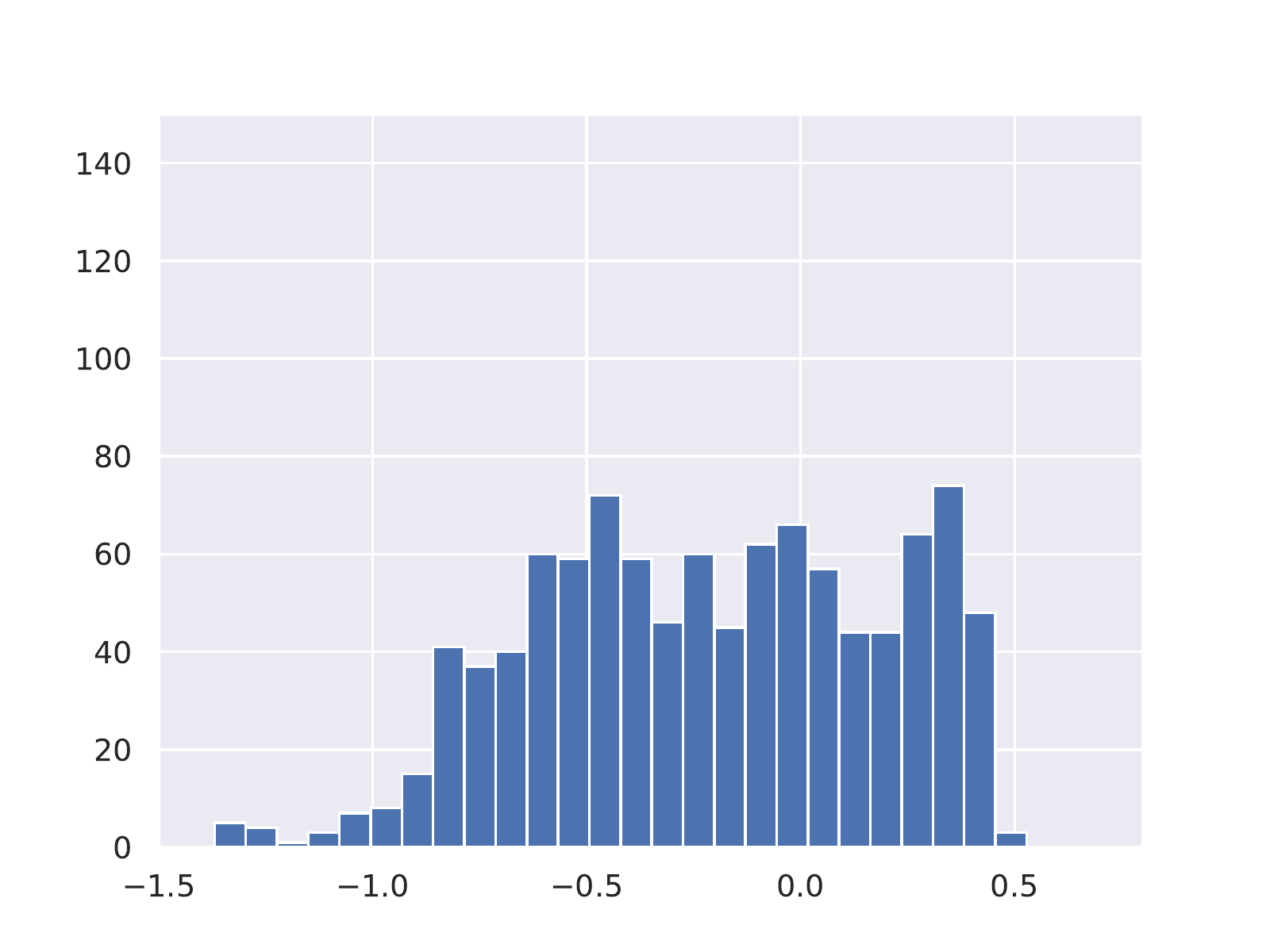}
         \captionsetup{justification=centering}
        \caption{Original Model, \\Test Aug (Target)}
        \label{subfig:orig_test}
    \end{subfigure}
    \begin{subfigure}[t]{0.33\textwidth}
        \centering
        \includegraphics[width=58mm]{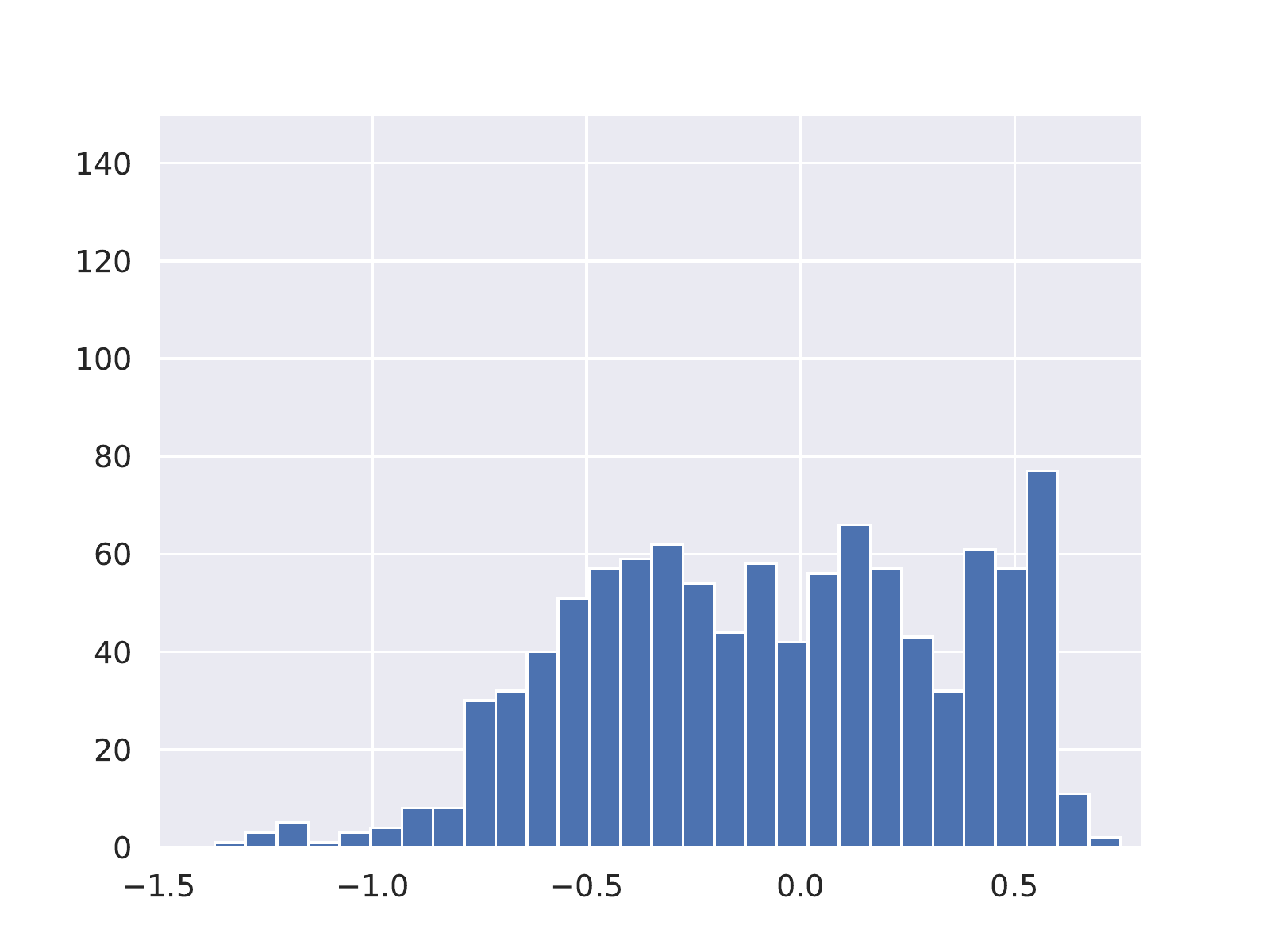}
         \captionsetup{justification=centering}
        \caption{Updated Model, \\Test Aug (Target)}
        \label{subfig:updated_test}
    \end{subfigure}
    \caption{Histograms of all activations for a single CIFAR-10 example and channel in the first Batch Normalization layer. This shows that features can be multimodal and that changes in the frequency of one mode (the spike of activations centered around $0.5$ on the source distribution in \Cref{subfig:orig_train}, which is heavily downweighted on the target distibution in \Cref{subfig:orig_test}) can cause AdaBN to shift another mode (the bulk of the activations, which shift to the right in \Cref{subfig:updated_test} relative to \Cref{subfig:orig_test}) when it would be better not to change the other activations at all.}
    \label{fig:multimodal_failure_practice}
\end{figure*}

\paragraph{In Practice.} 
In the conceptual example described above, individual coordinates for a single class were distributed according to a mixture distribution. To exhibit an analogous shift on real data, we first identify a surprising phenomenon. We find that applying AdaBN to the original test data (or even ImageNetV2) can degrade accuracy by a few percentage points (see \Cref{tbl:adabn_accuracies}).  
This is because models use data augmentation during training but not at test time, which can lead to a discrepancy if one naively aligns the train and test sets while only applying augmentation to the former.

We find that one can prevent this decrease in accuracy by updating the Batch Normalization statistics on the target data while using the augmentation used during training time, which mimics how the original Batch Normalization statistics were computed. We denote this approach by ``AdaBN + Aug.'' We only use augmentations for aligning the features but do not use them at test time. \Cref{tbl:adabn_aug_accuracies} displays the accuracies for AdaBN + Aug on each dataset. This modification consistently improves in-distribution accuracy and often improves OOD accuracy. For example, it improves accuracy on ImageNetV2 from $60.9\%$ to $63.8\%$, compared to $63.2\%$ for standard AdaBN, showing that using training augmentations with AdaBN can be necessary for improving OOD accuracy.

What does this have to do with mixture distributions? For CIFAR-10 and TinyImageNet, standard random cropping is part of the training augmentation. See \Cref{fig:aug_example} for an example of this augmentation. Therefore, at training time an image is visibly cropped with probability $p$ and not cropped (or only slightly cropped) with probability $1-p$, leading to a mixture distribution. At test time $p$ becomes $0$. Furthermore, analogously to the conceptual example, test accuracy is high without feature alignment but becomes lower with alignment.

To better understand this distribution shift, we visualize the activations of a convolutional filter in the first Batch Normalization layer in \Cref{fig:multimodal_failure_practice}. These activations are for a network with the image in \Cref{fig:aug_example} as its input.
We find that cropping with padding causes a peak in the activations of this filter (\Cref{subfig:orig_train}) that disappears when we remove the augmentation (\Cref{subfig:orig_test}). 
Consequently, if one applies AdaBN to the test set, the mean activation for this filter on the test set is less than during training. In trying to correct for this, AdaBN increases all activations (\Cref{subfig:updated_test}). This type of shift in the distribution of activations, which we also find for other images and layers, appears responsible for the performance degradation when AdaBN is applied to the clean test set.
In short, while reweighting the training examples doesn't hurt the model, AdaBN detects a change and renormalizes the activations when it shouldn't.

\section{Understanding AdaBN \& Discussion}\label{sec:discussion}

We now explain when and why AdaBN can improve or degrade robustness. The best case scenario for AdaBN is when a shift transforms the activations in a convolutional channel, $\vec{x}_i \in \mathbb{R}^{d\times d}$, according to
\begin{equation}\label{eqn:adabn_full}
    \hat{\vec{x}}_i = a\vec{x}_i + b\cdot \vec{1}\vec{1}^T \, ,
\end{equation}
for each example $i$. Here, $d$ is the spatial height and width, and $\vec{1}$ is the $d$-dimensional vector of all ones.

This is an affine function with coefficients shared across examples and spatial dimensions.
Since this transformation corresponds to changing the mean and variance (shared across examples and spatial dimensions), AdaBN exactly inverts distribution shifts that affect the activations in this way. This characterization of AdaBN provides insight into both why it can degrade robustness for some shifts and why it improves robustness for others.

\subsection{Why Can Aligning BN Statistics Hurt?}
\Cref{eqn:adabn_full} makes it clear why the failure modes we presented can occur. Because the coefficients $a$ and $b$ are fixed across examples and spatial dimensions, AdaBN relies on the implicit assumption that different samples and spatial locations are shifted in similar ways. While this is useful because it makes it possible for AdaBN to efficiently estimate the new mean and variance under the shift, it also means that AdaBN can degrade performance when this assumption is violated. Indeed, every failure mode in \Cref{sec:failure} violates this assumption. Concerningly, it may not be easy in practice to assess whether this assumption is being violated or not. This may make it difficult to trust methods like AdaBN in high-stakes applications involving arbitrary unforeseen distribution shifts for which we need high reliability. 

\subsection{Why Can Aligning BN Statistics Help?}
In practice, methods like AdaBN that align first and second order activation statistics can yield state-of-the-art robustness \citep{Schneider2020ImprovingRA, Nado2020EvaluatingPB}.
Moreover, by \Cref{eqn:adabn_full}, one can think of AdaBN as trying to invert a change in the scale and mean of activations.
The empirical success of AdaBN, together with this interpretation of \Cref{eqn:adabn_full}, suggests that one of the main effects of some distribution shifts is to simply change the scale and mean of the network's activations in each hidden layer.

We now provide intuition for why this might be true. Consider a convolutional filter in any layer of a CNN. 
One can think of it as a feature detector that activates most for a certain input pattern. Suppose this pattern becomes more common for all inputs under the distribution shift. 
Then even if this pattern was strongly correlated with a specific class on the source distribution, it will only be weak evidence of that class on the target distribution.
This suggests we should align the means; if every image becomes greener than it was before, and green is correlated with being a frog, then under the shift we should now consider ``very green'' to be evidence of a frog but ``somewhat green'' to be uninformative. 

Normalizing the variance, on the other hand, intuitively corrects for simple changes in the scale of the activations of a convolutional filter. For instance, if a feature becomes obscured under shift, such as if edges become blurrier, a convolutional filter that was trained to detect that feature may output activations that are closer to zero, decreasing the variance of this activation. 
However, the next layer still expects its inputs to be in a certain range. In this situation, normalizing the variance may amplify the signal that does exist by increasing the scale of the activations. 

In short, because changes in the prevalence of a pattern may result in simple changes in the activations of a feature detector for that pattern, normalization may be an effective way to partially undo the effects of some distribution shifts. This is the property that methods like AdaBN exploit.

\subsection{When Does AdaBN Help the Most?}
\citet{stylized_imagenet} argue that most ImageNet classifiers are overly reliant on the texture and style of images. 
This finding implies that most hidden layers in modern ImageNet classifiers capture low-level features such as texture or style more than they capture high-level features such as shape. Furthermore, AdaBN improves robustness by tweaking the activations of a trained network. This suggests that it is mostly ``fixing'' changes in style, since those are what activations mainly capture in the first place. Indeed, more abstract shifts in the distribution might not even register in the activations of the model because it was not trained to detect those sorts of features. 

Relatedly, \citet{li2017demystifying} draw a connection between style transfer methods and domain adaptation. They show that simply aligning the Batch Normalization statistics between two images can be used as an effective method for style transfer. Similarly, AdaBN aligns the Batch Normalization statistics between two distributions. This suggests that we can also interpret AdaBN as doing style transfer between two distributions, mapping the style of the shifted target distribution back to that of the original source distribution.

These perspectives predict that AdaBN should improve accuracy the most on distribution shifts involving changes in style and local image statistics, at least for current models, but should not substantially change performance on distribution shifts that involve more high-level, abstract changes. 
These predictions are supported by the observation that AdaBN improves accuracy much more on ImageNet-C and Stylized ImageNet than ImageNetV2. 
\Cref{tbl:adabn_accuracies} shows that AdaBN yields a relative accuracy improvement of $23\%$ for ImageNet-C and $43\%$ for Stylized ImageNet, shifts that almost exclusively involve changes in the style or texture of images, whereas it slightly degrades performance on ImageNetV2, a recollected version of ImageNet that should not have major differences in local image statistics.
This observation is further supported by the results in \citet{Schneider2020ImprovingRA}, which show that aligning BN statistics also does not help much with ImageNet-A \citep{imagenet-a} or ObjectNet \citep{barbu2019objectnet}, two other distribution shift benchmarks that, like ImageNetV2, do not primarily involve changes in local image statistics. 

These findings provide evidence for the idea that AdaBN improves robustness because it performs a sort of neural style transfer between the source and target distributions. While this makes AdaBN particularly well suited for some types of shift, such as ImageNet-C and Stylized ImageNet, it also suggests that the lackluster performance of the method on other types of distribution shifts is an inherent limitation rather than one that can be easily fixed.

\section{Conclusion}

\paragraph{Unforeseen Distribution Shifts.} Making systems robust under distribution shift is important for a wide range of applications \citep{hendrycks2019benchmarking}. UDA is considered a promising approach to this problem, but our results show that it must be used with care. For applications like self-driving cars, UDA methods should work even when applied to general, unforeseen distribution shifts. However, we find that aligning batch normalization statistics may actually degrade robustness on shifts that can arise in practice.
These limitations call into question the practical utility of aligning batch normalization statistics to improve robustness, especially for use in high-stakes applications.

\paragraph{Learning Representations.}
We also find that AdaBN disproportionately improves robustness on distribution shifts that mainly involve changes in local image statistics, such as changes in style or texture. It cannot help as much on distribution shifts involving changes in higher-level features because it only tweaks the activations of a trained network, which may not capture information about the high-level features that changed. This limitation suggests that to improve robustness for more general distribution shifts, it may be necessary to focus on learning robust representations rather than on modifying the activations of trained networks.

On the other hand, UDA methods that require additional training typically do so at test time. This is too slow for applications such as autonomous vehicles for which it is necessary to make predictions efficiently.
These drawbacks may make typical UDA methods a less promising approach to improving model robustness than other techniques that train models to have broadly robust feature representations, such as architectural changes \citep{Pan2018TwoAO} or data augmentation \citep{augmix}.

\paragraph{Future Work.}
To the best of our knowledge, there has been limited work on investigating how distribution shifts affect low-level network activations in a fine-grained way. 
Building on our work by analyzing these effects in more detail may yield additional insights into distribution shifts and the learned feature representations, and may help us develop better methods for improving robustness.

Furthermore, while there are theoretical justifications of feature alignment, they do not adequately explain when or why these methods work well in practice (\Cref{sec:related_work}). We conceptually and empirically addressed this in detail in the case of AdaBN, a particularly simple but effective method. 
Future work should more carefully identify when and why other methods for robustness are effective in practice.

Finally, we identified numerous drawbacks of current approaches to UDA. Future work should address these shortcomings to make these methods more useful and reliable for important applications.

\newpage

\paragraph{Acknowledgements.} We thank Dan Hendrycks and the anonymous reviewers for valuable feedback on earlier versions of this paper.

{\small
\setlength{\bibsep}{0pt}
\bibliographystyle{abbrvnat}
\bibliography{egbib}
}

\newpage
\appendix

\section{Additional Experimental Results}

\subsection{Uncertainty} 

Recent work has found that models become increasingly less calibrated under distribution shift \cite{uncertainty_shift}. Motivated by this problem, we also test whether AdaBN helps with calibration error on the target distribution. We use a simple and popular measure of calibration: the Expected Calibration Error (ECE) \cite{guo2017calibration}. 
In \Cref{tbl:adabn_cerr} we show the ECE for AdaBN on each dataset. We find that AdaBN substantially reduces calibration error on the corruption benchmarks and Stylized ImageNet, even cutting it in half in most cases. 

\begin{table}[th]
\caption{Expected Calibration Error (ECE) of AdaBN and variants on each shifted dataset. AdaBN substantially reduces the ECE on the corruption datasets \cite{hendrycks2019benchmarking} and Stylized ImageNet.}
\label{tbl:adabn_cerr}
\begin{center}
\begin{small}
\begin{sc}
\begin{tabular}{lcccccr}
\toprule
Method & C-10-C &  TIN-C & IN-C & INV2 & SIN  \\
\midrule
Original             & 21.5  & 25.0 & 12.0 & 10.6 & 30.9\\
AdaBN                      & 11.3 & 15.2 & 5.2 & 10.3 & 12.9\\
AdaBN + Aug             & 11.7 & 16.9 & 6.0& 10.9 & 14.4 \\
\bottomrule
\end{tabular}
\end{sc}
\end{small}
\end{center}
\end{table}

\subsection{AdaBN on subsets of classes}\label{appx:subsec:subsets}
We now provide additional results showing that applying AdaBN to subsets of classes can hurt accuracy, but that this is mitigated when one does not update the Batch Norm statistics in some of the final layers. In \Cref{fig:additional_label_shift} we show the same experiment as in Section 4.1, but this time for TinyImageNet and TinyImageNet-C. The results are qualitatively similar to those for CIFAR-10-C, though the difference between excluding the first layers vs the last layers is less dramatic for TinyImageNet-C.

\subsection{The importance of batch information}
A natural question is whether one can adapt feature alignment methods like AdaBN to a more restricted robustness setting where we do not have access to more than a single example at test time. A simple approach is to use normalization methods other than Batch Norm to align the feature distributions, but which do not use batch information. Two such methods are Group Norm \cite{group_norm} and Instance Norm \cite{instance_norm}.  Group Norm \cite{group_norm} normalizes over spatial locations and groups of multiple channels within a given layer. Instance Normalization (IN) \cite{instance_norm} was introduced for faster stylization. It normalizes over spatial locations over each channel separately, but unlike Group Norm and Batch Norm does not typically include learned affine parameters. We compared models trained using these different normalization schemes on CIFAR-10-C and TinyImageNet-C, along with the corresponding uncorrupted validation sets, and show the results in \Cref{tbl:norm_corr_acc}. In each case, we use the same architecture and hyperparameters as before, with the only difference being which normalization layer is used. For Group Norm, we test different numbers of groups ranging from $1$ to $16$, and for Instance Norm we test both with and without learned affine parameters. 

We find that the default robustness of the Batch Norm model was much lower on CIFAR-10-C than the default Instance Norm and Group norm models. However, after applying AdaBN to the Batch Norm model, its robustness ended up being higher than the other normalization methods, especially with the augmented version of AdaBN. The results for TinyImageNet are more difficult to interpret because the validation accuracy for Group Norm and especially Instance Norm are worse than for Batch Norm. Still, these results suggest that batch information can be important for improving robustness.

\begin{table}[th]
\caption{Comparing normalization methods on standard robustness benchmarks. Group Norm and Instance Norm both do worse than Batch Norm under distribution shift, even when the standard test accuracy is comparable.}
\label{tbl:norm_corr_acc}
\vskip 0.15in
\begin{center}
\begin{small}
\begin{sc}
\begin{tabular}{lccccr}
\toprule
Method & C-10 & C-10-C & TIN & TIN-C\\
\midrule
Original model        & 94.82 & 72.31 & 63.80 & 24.77\\
AdaBN                 & 92.84 & 83.63 & 60.32 & 40.11\\
AdaBN + Aug        & 94.84 & 86.78  & 64.05 & 41.80\\
IN (no affine) & 92.68 & 81.52  & 29.54 & 11.04\\
IN (affine) & 93.51 & 81.43  & 45.32 & 17.04 \\
GN ($1$ group) & 92.53 & 76.76  & 56.45 & 22.14 \\
GN ($4$ groups) & 93.32 & 78.15  & 59.34 & 23.18\\
GN ($16$ groups) & 93.85 & 81.68  & 58.11 & 22.91\\
\bottomrule
\end{tabular}
\end{sc}
\end{small}
\end{center}
\vskip -0.1in
\end{table}

\section{Further Discussion}
\paragraph{Covariate Shift}
Researchers have attempted to identify assumptions that are sufficient for successful unsupervised domain adaptation. One assumption that has been considered is covariate shift, i.e. $p_S(y | x) = p_T(y | x)$. \citet{ben2010impossibility} showed that covariate shift is not sufficient for UDA, even when paired with either (i) the assumption that $p_S(x) \approx p_T(x)$ or (ii) the assumption that there is a classifier in the hypothesis class with low error on both domains. 

The failures we present can occur even under the covariate shift assumption and even assuming there is no label shift (also known as prior shift or target shift). For shifted spatial locations, this is immediately true because we just made $x_2 = 0$, when $x_2$ didn't depend on the label in the first place. These two assumptions can also hold for the shifted examples failure; in the simplest case, this is is true when $p(y = -1) = 1$. 

The covariate shift assumption is less clear with our failure modes on real data. Nevertheless, it should at least approximately hold in these cases, and can be modified to exactly hold. In particular, while both real shifts (black border and data augmentation) can cut out some relevant features, they rarely change the ground truth label. 

\section{Theoretical results}
\subsection{Target error bounds can be uninformative}\label{appx:subsec:uninformative}
Denote the target and source classification errors by $\epsilon_T(h)$ and $\epsilon_S(h)$ respectively, and denote the optimal joint error by $\lambda := \min_{h \in \mathcal{H}} \epsilon_T(h) + \epsilon_S(h)$. \citet{ben2010theory} show that for any $h \in \mathcal{H}$,
\begin{equation}\label{eqn:ben_david_thm}
    \epsilon_T(h) \leq \epsilon_S(h) + \lambda + |\epsilon_T(h, h^*) - \epsilon_S(h, h^*)| \,,
\end{equation}
where $\epsilon_S(h, h^*) = Pr_{x \sim D_S}[h(x) \neq h^*(x)]$ and $\epsilon_T(h, h^*) = Pr_{x \sim D_T}[h(x) \neq h^*(x)]$. \citet{ben2010theory} also upper bound $|\epsilon_T(h, h^*) - \epsilon_S(h, h^*)|$ in terms of a distance $d_{\mathcal{H}\Delta\mathcal{H}}(D_S, D_T)$ between $D_S$ and $D_T$,
\begin{equation}
    d_{\mathcal{H}\Delta\mathcal{H}}(D_S, D_T) = \sup_{h \in \mathcal{H}} |\epsilon_T(h, h^*) - \epsilon_S(h, h^*)| \,.
\end{equation}

Many methods aim to minimize $\epsilon_S(h)$ and $d_{\mathcal{H}\Delta\mathcal{H}}(D_S, D_T)$. In practice $\lambda$ is an unknown quantity that depends on the true target labeling function, so most feature alignment methods ignore it. However, this makes it unclear whether this bound provides much of a guarantee even for methods that were directly inspired by it. 

We now show that even if one \emph{does} make $\lambda$ small, such as by using a flexible class $\mathcal{H}$ of neural networks, then the bound proved by \citet{ben2010theory} can be uninformative for a different reason. In particular, when $\lambda = 0$ the bound is equivalent to the triangle inequality. Specifically, when $\lambda = 0$, this means that $\epsilon_T(h, h^*) = \epsilon_T(h)$ and $\epsilon_S(h, h^*) = \epsilon_S(h)$. Hence, the bound in \Cref{eqn:ben_david_thm} reduces to
\begin{equation}
    \epsilon_T(h) \leq \epsilon_S(h) + |\epsilon_T(h) - \epsilon_S(h)| \,,
\end{equation}
which is always true. Upper bounding this in terms of $d_{\mathcal{H}\Delta\mathcal{H}}(D_S, D_T)$ is then equivalent to:
\begin{equation}
    \epsilon_T(h) \leq \epsilon_S(h) + \sup_{h \in \mathcal{H}}|\epsilon_T(h) - \epsilon_S(h)| \,,
\end{equation}
which is still uninformative. 

Other generalization bounds have been proven, such as by \citet{on_invariant_reps, johansson2019support}, but these also don't explain why aligning the feature distributions helps in practice. \citet{on_invariant_reps} essentially replace $\lambda$ with a term that captures the difference between the true source and target labeling functions. \citet{johansson2019support} prove a bound based on the support of the source and target distributions that explicitly accounts for the non-invertibility of the feature representation. However, both bounds still include an unobservable quantity that feature alignment methods ignore. Neither paper explains why these unobservable terms should be small in practice for such methods.

\subsection{AdaBN for approximately affine shifts}\label{appx:subsec:approx_affine}
We saw that AdaBN exactly removes shifts that are characterized by a particular type of affine transformation. We now bound how well it removes a shift that is only approximately characterized by such a transformation. For simplicity we focus on the one dimensional setting.

\begin{theorem}\label{thm:thm1}
Suppose $D_S$ is some source distribution. Define $D_T$ by sampling $x \sim D_S$ and letting $\tilde{x} = ax + b + \epsilon$, where $a > 0$ and $b$ are constant, and $\epsilon$ is an arbitrary zero-mean random variable. Let $\hat{\mu}$ and  $\hat{\vec{\sigma}}^2$ be the mean and variance of $\tilde{x}$. Define $\hat{x} = (\tilde{x} - \hat{\mu}) \cdot ( \sigma / \hat{\sigma}) + \mu$, where $\mu$ and $\sigma$ are the mean and standard deviation of $x$. Assume $|\epsilon| \leq r$ and define $\delta = \frac{1}{a\sigma}\sqrt{2ar|x| + r^2}$. If $\delta < 1$, then:
\begin{equation}\label{eqn:thm1}
    |\hat{x} - x| \leq 2|x|\delta + \frac{r}{a}(1 + 2\delta)
\end{equation}
Moreover, if we additionally assume that $\expec[\epsilon x] = 0$ ($\epsilon$ and $x$ are uncorrelated), then \Cref{eqn:thm1} holds for $\delta = \frac{r}{a\sigma}$.
\end{theorem}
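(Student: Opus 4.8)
The plan is to reduce the whole estimate to controlling a single scalar, the standard-deviation ratio $a\sigma/\hat\sigma$, and then to add back the contribution of the noise $\epsilon$. First I would compute the first two moments of the target variable. Since $\epsilon$ is zero-mean, $\hat\mu = \expec[\tilde x] = a\mu + b$, so the centered target is $\tilde x - \hat\mu = a(x-\mu) + \epsilon$. Substituting into the definition of $\hat x$ and subtracting $x$ gives the exact identity
\begin{equation}
  \hat x - x = (x-\mu)\left(\frac{a\sigma}{\hat\sigma} - 1\right) + \epsilon\,\frac{\sigma}{\hat\sigma}.
\end{equation}
(Here I read the symbol $|x|$ in the statement as the centered magnitude $|x-\mu|$; equivalently one may take $\mu = 0$.) The triangle inequality together with $|\epsilon|\le r$ then yields
\begin{equation}
  |\hat x - x| \le |x|\left|\frac{a\sigma}{\hat\sigma} - 1\right| + r\,\frac{\sigma}{\hat\sigma},
\end{equation}
so the whole theorem reduces to the single estimate $\left|\tfrac{a\sigma}{\hat\sigma}-1\right|\le 2\delta$: the first term then contributes $2|x|\delta$, while $\frac{\sigma}{\hat\sigma} = \frac1a\cdot\frac{a\sigma}{\hat\sigma}\le\frac1a(1+2\delta)$ makes the second term contribute $\frac{r}{a}(1+2\delta)$, which is exactly the claimed bound.

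Next I would control the variance of $\tilde x$. Expanding, $\hat\sigma^2 = a^2\sigma^2 + 2a\,\mathrm{Cov}(x,\epsilon) + \mathrm{Var}(\epsilon)$. Using $|\epsilon|\le r$ gives $0\le\mathrm{Var}(\epsilon)\le r^2$, and Cauchy--Schwarz (or $|\epsilon|\le r$ applied directly inside the expectation) gives $|\mathrm{Cov}(x,\epsilon)| = |\expec[x\epsilon]| \le r\,\expec|x-\mu|$, which is the global quantity that the $2ar|x|$ term in $\delta$ stands for under the above convention. Combining these, $|\hat\sigma^2 - a^2\sigma^2| \le 2ar|x| + r^2 = a^2\sigma^2\delta^2$, i.e. $\hat\sigma^2/(a^2\sigma^2)\in[1-\delta^2,\,1+\delta^2]$. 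This is exactly why the hypothesis $\delta<1$ is imposed: it keeps $\hat\sigma$ bounded away from $0$ so that $a\sigma/\hat\sigma$ is finite. In the uncorrelated case $\expec[\epsilon x]=0$ the cross term vanishes, leaving $a^2\sigma^2\le\hat\sigma^2\le a^2\sigma^2+r^2$, which is precisely the sharper statement $\delta = r/(a\sigma)$ of the ``moreover'' clause.

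Finally I would convert this variance estimate into the ratio bound. Writing $a\sigma - \hat\sigma = (a^2\sigma^2-\hat\sigma^2)/(a\sigma+\hat\sigma)$ and using $a\sigma+\hat\sigma\ge a\sigma$ together with $\hat\sigma\ge a\sigma\sqrt{1-\delta^2}$ gives $\left|\frac{a\sigma}{\hat\sigma}-1\right| = \frac{|a\sigma-\hat\sigma|}{\hat\sigma}\le\frac{\delta^2}{\sqrt{1-\delta^2}}$, after which an elementary estimate delivers the clean linear constant $2\delta$. I expect this last conversion to be the main obstacle. The harmless direction is when the noise \emph{inflates} the variance ($\hat\sigma\ge a\sigma$, which always occurs in the uncorrelated case, where $\sqrt{1+\delta^2}-1\le \delta^2/2\le 2\delta$ is immediate); the delicate direction is when the cross-covariance \emph{deflates} the variance below $a^2\sigma^2$ and $a\sigma/\hat\sigma$ can grow like $1/\sqrt{1-\delta^2}$. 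Taming that factor to obtain the stated $2\delta$ is the only place the precise constant is at stake, and is where the bulk of the care in a full proof must go.
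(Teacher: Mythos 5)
Your proposal is correct and follows essentially the same route as the paper: the same exact decomposition $\hat x - x = x\left(\frac{a\sigma}{\hat\sigma}-1\right)+\epsilon\,\frac{\sigma}{\hat\sigma}$, the same variance expansion $\hat\sigma^2 = a^2\sigma^2+2a\expec[x\epsilon]+\expec[\epsilon^2]$ with the two-sided bound $|\hat\sigma^2-a^2\sigma^2|\le 2ar|x|+r^2$, and then a conversion of that bound into $\left|\frac{a\sigma}{\hat\sigma}-1\right|\le 2\delta$. The one step you flag as the main obstacle is handled in the paper via $1-\sqrt{t}\le\sqrt{1-t}$ followed by $\frac{1}{1-z}\le 1+2z$, which actually only delivers the constant $2$ when $\delta\le\frac12$; your difference-of-squares variant yields $\frac{\delta^2}{\sqrt{1-\delta^2}}\le 2\delta$ for all $\delta\le\frac{2}{\sqrt5}$, so your concern about taming the deflated-variance case is legitimate but applies at least as strongly to the paper's own argument as to yours.
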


\begin{proof}
Assume without loss of generality that $\mu = 0$. Then we can easily estimate and subtract $\expec[\tilde{x}] = b$, so we may also assume that $b=0$. Hence, $\tilde{x} = ax + \epsilon$ has mean zero, so
\begin{equation}
    \hat{\sigma}^2 = \expec[\tilde{x}^2] = a^2\sigma^2 + 2a\expec[x\epsilon] + \expec[\epsilon^2]
\end{equation}
By assumption, $|\epsilon| \leq r$, so $|\expec[x\epsilon]| \leq r|x|$. Hence:
\begin{equation}\label{eqn:sigma_bound}
    a^2\sigma^2 - 2ar|x| \leq \hat{\sigma}^2 \leq a^2\sigma^2 + 2ar|x| + r^2
\end{equation}

This implies:
\begin{equation}
    |\hat{x} - x| = |(ax + \epsilon) \cdot \frac{\sigma}{\hat{\sigma}} - x| = |x(a \frac{\sigma}{\hat{\sigma}} - 1) + \epsilon \frac{\sigma}{\hat{\sigma}}|
\end{equation}
\begin{equation}\label{eqn:diff_bound}
    \leq |x||a\sigma / \hat{\sigma} - 1| + r|\sigma / \hat{\sigma}|
\end{equation}

But by \cref{eqn:sigma_bound}, we have:
\begin{equation}
    \hat{\sigma} / a\sigma \in \sqrt{1 \pm \frac{1}{a^2\sigma^2}(2ar|x| + r^2)}
\end{equation}

Let $\delta := \frac{1}{a^2\sigma^2}(2ar|x| + r^2)$. By assumption, $\delta \in [0, 1)$. Hence, $1-\sqrt{\delta} \leq \sqrt{1-\delta}$ and $\sqrt{1+\delta} \leq 1+\sqrt{\delta}$. This implies
\begin{equation}
    \hat{\sigma} / a\sigma \in 1 \pm \sqrt{\delta}
\end{equation}

Finally, using that for $z \in [0, 1]$, $\frac{1}{1+z} \geq 1-z$ and $\frac{1}{1-z} \leq 1+2z$, this implies
\begin{equation}
     a\sigma / \hat{\sigma}  \in 1 \pm 2\sqrt{\delta}
\end{equation}

Combining this with \cref{eqn:diff_bound} yields
\begin{equation}
    |\hat{x} - x| \leq 2|x|\sqrt{\delta} + \frac{r}{a}(1 + 2\sqrt{\delta})
\end{equation}
Replacing $\sqrt{\delta}$ with $\delta$ yields the desired result. When $x$ and $\epsilon$ are uncorrelated, the $2arx$ term becomes zero and everything else remains the same, so this term just disappears from $\delta$. 
\end{proof}

The theorem bounds the difference between a shifted but normalized input $\hat{x}$ and the original unshifted input $x$. The bound suggests two things. First, $\frac{r}{a}$ should be small. In other words, the scale of the error, $r$, should not be too large relative to the magnitude of rescaling, $a$, since otherwise the error terms will dominate the ``signal'' after rescaling. Moreover, it precisely describes in what sense the errors $\epsilon$ should be well-behaved. The bound becomes tighter when both $\epsilon$ and $x$ are decorrelated, and when the scale of $\epsilon$ is small.

\begin{figure*}[ht]
    \centering
    \begin{subfigure}[t]{0.49\linewidth}
        \centering
        \includegraphics[scale=0.40]{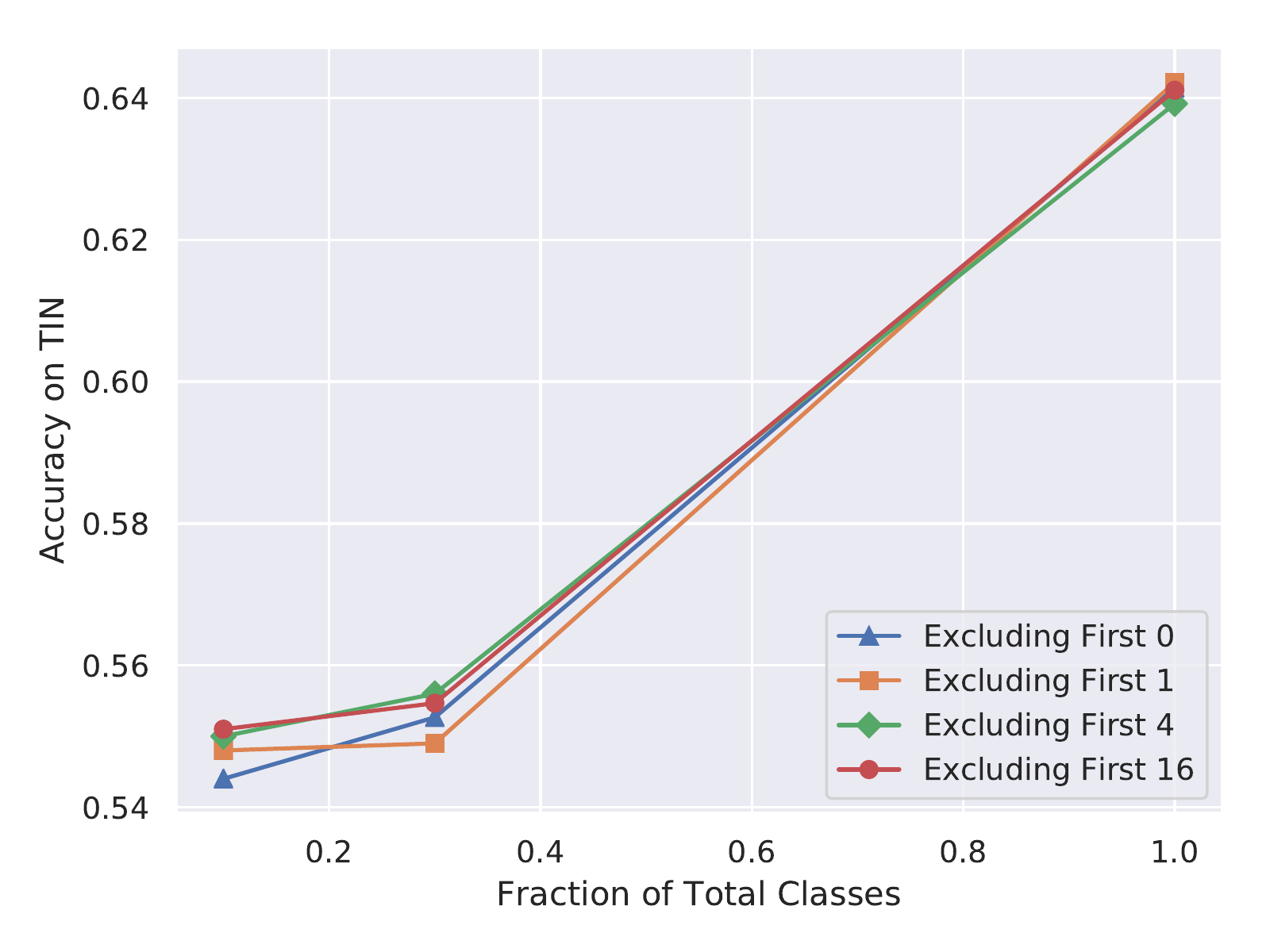}
        \caption{TinyImageNet Excluding First Layers}
    \end{subfigure}
    \begin{subfigure}[t]{0.49\linewidth}
        \centering
        \includegraphics[scale=0.40]{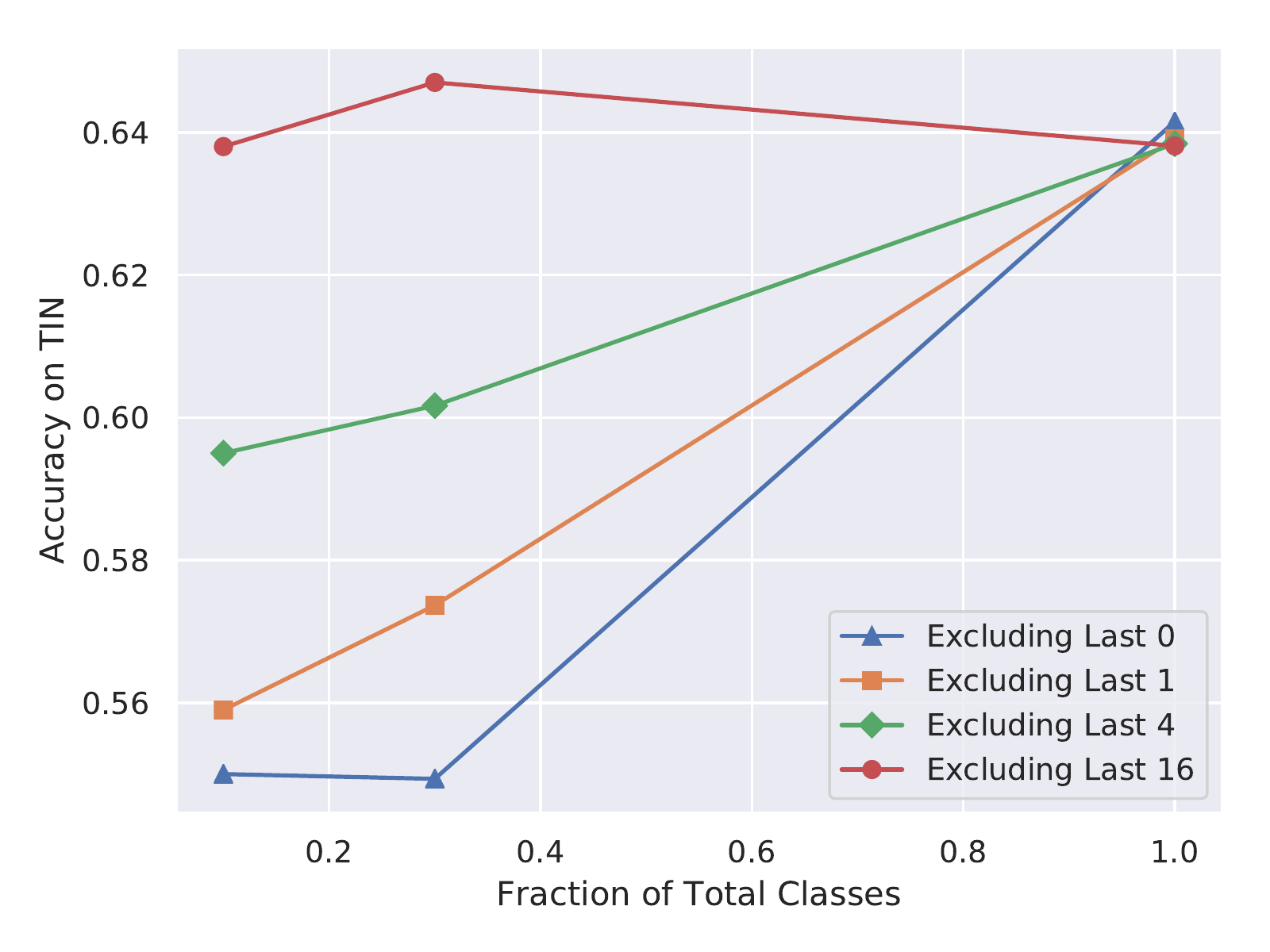}
        \caption{TinyImageNet Excluding Last Layers}
    \end{subfigure}
    
    \begin{subfigure}[t]{0.49\linewidth}
        \centering
        \includegraphics[scale=0.40]{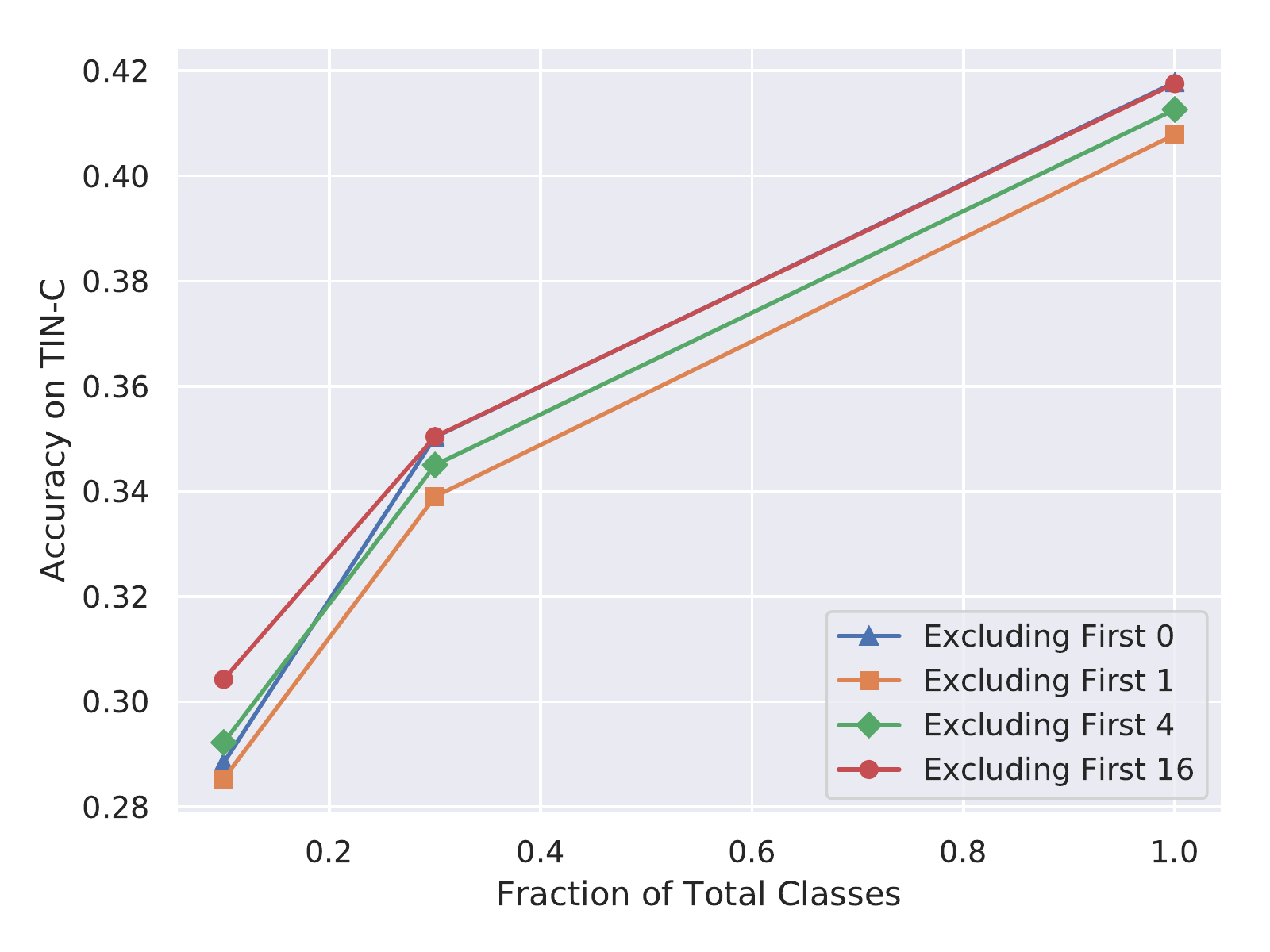}
        \caption{TinyImageNet-C Excluding First Layers}
    \end{subfigure}
    \begin{subfigure}[t]{0.49\linewidth}
        \centering
        \includegraphics[scale=0.40]{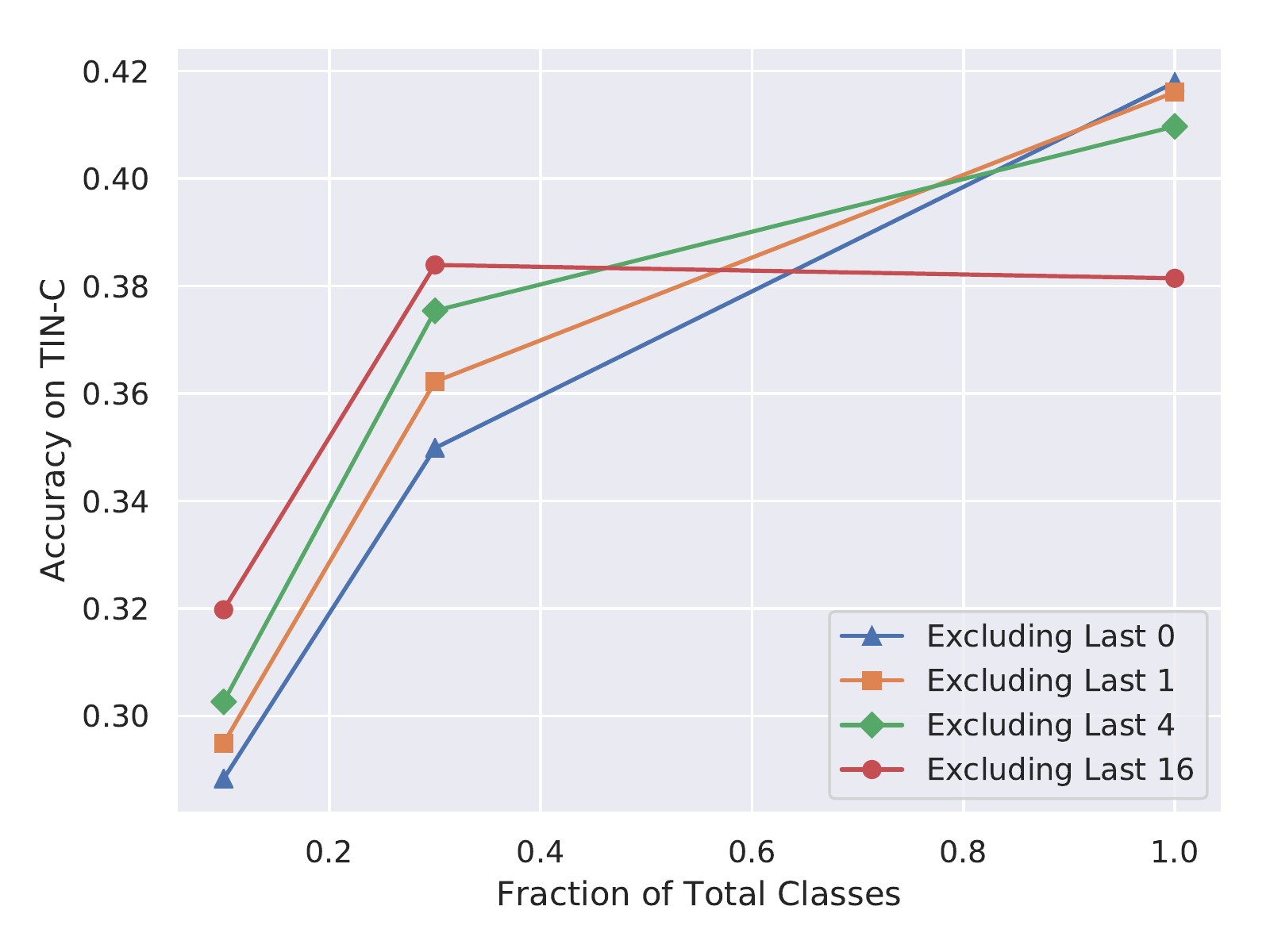}
        \caption{TinyImageNet-C Excluding Last Layers}
    \end{subfigure}
    \caption{The effect of updating the Batch Norm statistics using AdaBN + Aug on different subsets of classes for TinyImageNet and TinyImageNet-C. The results are qualitatively similar to those found in \Cref{fig:label_shift}}
    \label{fig:additional_label_shift}
\end{figure*}

\end{document}